\documentclass[twoside]{article}

\usepackage[preprint]{aistats2027}

\usepackage{url}
\usepackage[hidelinks]{hyperref}

\makeatletter
\renewcommand{\@copyrightspace}{%
  \begin{figure}[b]
    \raggedright
    \rule{4.818pc}{0.4pt}\par\vspace{3pt}%
    {\small
      Authors are listed in alphabetical order.\\
      Correspondence: \texttt{matteo.papini@unimi.it}\par}
  \end{figure}%
}
\makeatother

\usepackage{algorithm, algorithmicx, algpseudocode}
\usepackage{xspace}
\usepackage{amsthm}
\usepackage{thm-restate}
\usepackage{xcolor}
\usepackage{natbib}
\usepackage{mathtools}
\usepackage{tabularray}
\usepackage{subcaption}
\usepackage{placeins}

\input{mymacros}
\newcommand{\osmd}{\textsc{OSMD}\xspace}
\newcommand{\algname}{\textsc{OSMA-W}\xspace}
\newcommand{\problem}{\textsc{FB+WF}\xspace}
\newtheorem{lemma}{Lemma}

\newtheorem*{remark}{Remark}

\usepackage{tabularray}
\definecolor{rowcolor}{gray}{0.9}

\begin{document}

%

%

\twocolumn[

\aistatstitle{
$m$-Set Adversarial Bandits with Winner Feedback
}

\aistatsauthor{ Nicolò Cesa-Bianchi \And Matteo Papini}

\aistatsaddress{ Università degli Studi di Milano \\ Politecnico di Milano \And Università degli Studi di Milano } ]

\begin{abstract}
We show upper and lower bounds on the regret of $m$-set adversarial bandits for different utilities (winner reward or sum of rewards) and feedback models (winner index, winner reward, sum of rewards, and their combinations). By comparing to standard bounds for combinatorial and MNL bandits, our results reveal how subtle changes in the setting can have a dramatic impact on the learning rates. Our main technical contributions are the information-theoretic lower bounds on the regret. Experiments on synthetic data confirm our theoretical analyses.
\end{abstract}

\section{INTRODUCTION}
Combinatorial bandits \citep{cesa2012combinatorial} are an established model of sequential decision-making with partial feedback and combinatorially structured decision sets. The special case of $m$-sets, when the decision set is all the $m$-sized subsets $S$ of a base set of $K > m$ actions, is especially interesting because of its many applications to recommendation and assortment selection. MNL bandits \citep{agrawal2019mnl} is an alternative setting specifically designed for this type of applications. Unlike combinatorial bandits, where the utility of a $m$-subset $S$ is the sum of the unknown rewards of the base actions in $S$, in MNL bandits rewards (profits) are known, and the expected utility of $S$ is the reward of a base action drawn from $S$. The distribution of this draw follows a Plackett-Luce model based on unknown attraction parameters associated with each base action, typically including a no-purchase option which has zero reward for the decision-maker. The feedback also differs in the two settings: in MNL bandits the player observes the index of the drawn base action (winner feedback), while combinatorial bandits distinguish between full bandit feedback (i.e., the utility of $S$) and semi-bandit feedback (i.e., the reward of each action in $S$).

In this work we introduce an intermediate setting bridging $m$-set adversarial bandits and MNL bandits. Like in combinatorial bandits, the utility of $S$ is the sum of the unknown and adversarially chosen rewards of the base actions in $S$. However, our feedback includes both full bandit feedback and winner feedback. Unlike typical MNL bandits, the attraction parameters are identified with the rewards, which are unknown. This feedback model is thus richer than full bandit feedback but poorer than semi-bandit feedback (we can simulate winner feedback using semi-bandit feedback). In the following, we refer to this model as \problem (Full Bandit plus Winner Feedback).

A natural application of this setting arises in slate-based recommendation systems, where a platform presents a user with a subset of products from a larger catalog. Each product has an unknown value for the user, and the quality of the displayed slate is modeled additively. The platform seeks to learn which collections of products yield high aggregate value over time. In many systems, however, feedback is only partially attributable to individual items. For example, after a recommendation session the platform may observe an aggregate measure of slate performance---such as total engagement, expected margin, or a session-level satisfaction score---while directly observing only which displayed product the user selected. Thus the learner observes the overall value generated by the slate and the identity of the item that ``won'' the user's attention, but not the individual rewards of all displayed products. This feedback model captures settings in which aggregate performance is measurable but fine-grained attribution is unavailable.

Our main result is the characterization, up to logarithmic factors, of the minimax regret of \problem, where the regret is measured against the best fixed $m$-set for the adversarial sequence of rewards. More precisely, we show that the minimax regret is of order $\widetilde{\Theta}\big(m\sqrt{KT}\big)$ where $T$ is the time horizon. Note that this rate falls exactly in between the corresponding minimax rates $\widetilde{\Theta}\big(m^{3/2}\sqrt{KT}\big)$ for full-bandit feedback and $\Theta\big(\sqrt{mKT}\big)$ for semi-bandit feedback. We also show that any algorithm that operates with winner feedback only has linear minimax regret, even if the feedback includes both index and reward of the winning arm.

A related model, introduced in a different context by \citet{AlaturL020}, assumes the feedback received by the learner is the index and reward of an arm drawn uniformly at random from $S$. We extend our techniques to prove a lower bound of order $m\sqrt{KT}$, matching (up to log factors) the upper bound proven in \citep{AlaturL020}. Note that this rate matches the one for \problem up to logarithmic factors. 


We also study a variant of our problem that is closer to MNL bandits, in which the utility of $S$ is not the sum of rewards but the expected reward of the winner. We show that, when the feedback is the sum of rewards, any learner suffers linear regret, even if the winner index is also revealed. Linear regret also occurs when the utility is the sum of rewards and the feedback is the index and reward of the winner. In both cases, the feedback distribution does not determine the expected utility of the actions. Our lower bounds exploit these facts by constructing pairs of reward distributions that induce the same feedback law for every action but have different optimal actions. This phenomenon is distinct from the scale invariance of the Plackett-Luce choice model (multiplying all rewards by a common positive factor leaves the winner distribution unchanged).

Finally, when the learner maximizes the winner reward, which is also provided as feedback (together with the winner index), we prove that the regret is $\Omega(\sqrt{(K/2m)^mT})$ and $\mathcal{O}(\sqrt{(eK/m)^mT})$. These rates are similar to adversarial MNL bandits with uncoupled attraction/utility parameters and a no-purchase option~\citep{han2021adversarial}, see the next section.

\textbf{Technical challenges.}
In our model, winner feedback discloses information about the individual rewards that is not captured by the total reward of the $m$-set. To design a good learning algorithm, we need to exploit this extra information and carefully combine it with the ordinary full-bandit feedback. The main idea is to simulate semi-bandit feedback and use known good algorithms for this setting~\citep{audibert2014regret}. Establishing an improved regret upper bound compared to full-bandit requires a careful handling of base-arm reward estimators.
However, the main challenge is to establish a matching regret lower bound. Although we can largely re-use constructions developed for the full-bandit feedback~\citep{audibert2014regret,CohenHK17,ito2019improved}, the standard information-theoretic arguments are broken by the extra information leaked by the winner's index. This requires both an ad-hoc reward construction and a technically involved KL divergence upper bound. Also for the alternative models we study, the main challenges pertain to the regret lower bounds.

\begin{table*}[t]
  \centering
  \begin{tblr}{Q[c,m]Q[c,m]Q[c,m]Q[c,m]Q[c,m]}
    \bfseries Utility &\bfseries Feedback &\bfseries Name &\bfseries Regret &\bfseries Reference \\ \hline
    \SetCell[r=2]{} {Sum of \\ rewards} &\SetCell[r=2]{} {Sum of \\ rewards} & \SetCell[r=2]{} Full-bandit& $\mathcal{O}\big(m^{3/2}\sqrt{KT\ln\frac{K}{m}}\big)$ & \cite{bubeck2012towards}\\
    & & & $\Omega\big(m^{3/2}\sqrt{KT}\big)$ & \cite{ito2019improved}\\
    \SetRow{bg=rowcolor}
    {Sum of \\ rewards} & {Reward of \\ each arm in $S$} & Semi-bandit & $\Theta\big(\sqrt{mKT}\big)$ & \cite{audibert2014regret} \\
    \SetCell[r=2]{} {Sum of \\ rewards} &\SetCell[r=2]{} {Winner index $+$ \\ sum of rewards} & \SetCell[r=2]{}\problem & $\mathcal{O}\big(m\sqrt{KT\ln\frac{K}{m}}\big)$ & \textbf{This paper} (Thm.~\ref{th:upper})\\
     & & & $\Omega\big(m\sqrt{KT}\big)$ & \textbf{This paper} (Thm.~\ref{th:lower2})\\
    \SetRow{bg=rowcolor}
    {Sum of \\ rewards} & {Winner index $+$ \\ winner reward} & & $\Omega(T)$ & \textbf{This paper} (Thm.~\ref{th:linearlb}) \\
        \SetCell[r=2]{} {Sum of \\ rewards} &\SetCell[r=2]{} {Reward of a \\ random arm in $S$} & \SetCell[r=2]{}{Random-arm or \\Multiplayer MAB} & $\mathcal{O}\big(m\sqrt{KT\ln K}\big)$ &\cite{AlaturL020} \\
     &  & & $\Omega\big(m\sqrt{KT}\big)$ & \textbf{This paper} (Thm.~\ref{th:lowermulti}) \\
    \SetRow{bg=rowcolor}
    {Winner \\ reward} & {Winner index $+$ \\ sum of rewards} & & $\Omega(T)$ &\textbf{This paper} (Thm.~\ref{th:profitlb})\\
    \SetCell[r=2]{} {Winner \\ reward} &\SetCell[r=2]{} {Winner index + \\ winner reward} & & $\mathcal{O}\big(\sqrt{\big(\frac{eK}{m}\big)^mT}\big)$ &\SetCell[r=2]{}\textbf{This paper} (Thm.~\ref{th:winwin}) \\
     & & & $\Omega\big(\sqrt{\big(\frac{K}{2m}\big)^mT}\big)$ & \\
  \end{tblr}
  \caption{Old and new results for $m$-set bandits. 
  }
  \label{tab:summary}
\end{table*}

\section{RELATED WORKS}
The minimax rates for the regret of $m$-set adversarial bandits were established by \citet{bubeck2012towards} (upper bound) and \cite{ito2019improved} (lower bound) for full bandit feedback, and by \citet{audibert2014regret} for semi-bandit feedback.

Recall that in MNL bandits arm attraction parameters (determining the winning arm) and arm rewards (determining the learner's utility) are uncoupled. Moreover, the results mentioned below here consider an extra no-purchase arm with a known constant attraction parameter.

The regret of MNL bandits was studied by \citet{agrawal2019mnl} with fixed and unknown attraction parameters and known rewards. When the utility is the reward of the winning arm and the feedback is the index of the winning arm, they establish an upper bound on the regret of order $\sqrt{KT}$ and a lower bound of order $\sqrt{KT/m}$. Note, however, that their upper bound holds for a large class of constraints (including the cardinality constraint considered in this work).

\citet{han2021adversarial} investigated MNL bandits with adversarial attraction parameters. They prove a regret lower bound of order $\sqrt{(K/m)^m T}$ for oblivious adversaries\footnote{The need for adaptive adversaries in \citep[Corollary~3]{han2021adversarial} is due to the statement being expressed as ``$\exists$ an adversary such that $\forall$ algorithms.'' If quantifiers are swapped, then it is easy to see that the same lower bound applies even to oblivious adversaries.} and winner index feedback. This result holds even when all rewards are equal to one. In this case, the binary feedback (i.e., purchase or no purchase) is a Bernoulli random variable which corresponds to the player's reward. This shows that the MNL model with (noisy) bandit feedback has a regret exponential in $m$ when the attraction parameters are allowed to be chosen adversarially. The exponential dependence in \cite[Corollary~3]{han2021adversarial} hinges on the no-purchase option, whose presence makes the expected reward of any base action a non-polynomial function of the base action attraction.

MNL bandits with fixed attraction parameters, without no-purchase option, and where the learner's utility is the sum of the rewards in $S$ were considered by \citet{saha2019combinatorial}. For this model, they prove an instance-dependent upper bound on the regret of order $\mathcal{O}(K\ln T)$ using winner index feedback only. Note, however, that here the player is allowed to play any set $S$ of size at most $m$ (as opposed to sets of size exactly $m$).

\citet{pmlr-v238-pasteris24a} studied a related adversarial model where the winning arm is any arm in $S$ achieving maximum reward (there are no attraction parameters), and the utility and feedback are just the reward of the winning arm. Similarly to the previous case, the player is allowed to play any set $S$ of size at most $m$. They prove an upper bound of order $\sqrt{mKT}$ on the $\alpha$-regret, where $\alpha = 1-\frac{1}{e}$. Assuming hardness of improving the logarithmic approximation factor of set cover, they also prove that no efficient algorithm can control $\alpha$-regret for any larger value of $\alpha$.

Finally, \citet{bhaskara2023online} consider the standard stochastic $K$-armed bandits augmented with probes. In their model closest to ours, the winning arm is any arm in $S$ with maximum realized reward, the utility is the reward of the winning arm, and the feedback is winner index and reward. They show a distribution-free upper bound on the regret of order $K^2\ln T$ when $m=2$. If the feedback includes the reward of all arms in $S$, the regret becomes $K^2$ (no dependence on $T$) for $m=3$.

\textbf{Notation.} 
For natural numbers $K,m$, let $[K]=\{1,2,\dots,K\}$ and ${[K]\choose m}$ be the set of all subsets of $[K]$ of size $m$. For a vector $x\in\Reals^d$, we denote its $i$-th component as $x(i)$.
For vectors $x,y\in\Reals^d$, let $\langle x,y\rangle$ denote their scalar product and $x\circ y$ their elementwise product. For any predicate $p$, let $\indi{p}$ be one if $p$ is true, zero otherwise.
For a set $\mathcal{A}\subseteq\{0,1\}^d$, let $\Conv(A)$ denote its convex hull and $\Delta(\mathcal{A})$ the set of probability distributions over $\mathcal{A}$. For an open convex set $\mathcal{D}\subseteq \Reals^d$, let $\overline{\mathcal{D}}$ denote its topological closure and $\partial \mathcal{D}=\overline{\mathcal{D}}\setminus\mathcal{D}$ its boundary. For a Legendre function $F:\overline{D}\to\Reals$, let $D_F:\overline{\mathcal{D}}\times\mathcal{D}\to\Reals$ denote the associated Bregman divergence, $D_F(x,y)=F(x)-F(y)-\langle x-y, \nabla F(y)\rangle$.

\section{PROBLEM STATEMENT}
The $m$-set adversarial bandit problem is a combinatorial bandit with $K$ base arms and decision set corresponding to all $m$-sized subsets of $[K]$. We consider an oblivious adversary, secretly assigning a reward $r_t(i)\in[0,1]$ for each round $t \in [T]$ and for each base arm $i\in[K]$, where $T$ is the given time horizon. At each round $t=1,\dots,T$, the learner plays a (possibly random) subset $S_t\subseteq[K]$ of cardinality $|S_t|=m<K$ and receives the total reward $r_t(S_t)$. For any $S\subseteq[K]$, the total reward is defined as:
\begin{equation}
    r_t(S) = \sum_{i\in S}r_t(i).
\end{equation}
In our \problem\ setting, the learner observes the full bandit feedback $r_t(S_t)$ and also a ``winner'' base arm $I_t\sim q_t(\cdot|S_t)$ sampled from the played subset according to the  \emph{Plackett-Luce choice model} with linear attraction parameters \citep{Luce1979IndividualCB,Plackett1975TheAO}:
\begin{equation}\label{eq:choice}
    q_t(i|S) = \begin{cases}
        \frac{r_t(i)}{r_t(S)} &\text{if } r_t(S)>0\\
        \frac{1}{|S|} &\text{otherwise},
    \end{cases}
\end{equation}
defined for any $S\subseteq[K]$ and all $i\in S$. Neither the choice probabilities nor the reward of the winner base arm are revealed to the learner. The goal of the learner is to minimize the expected total regret with respect to the best subset in hindsight, defined as:
\begin{equation}
    R_T = \max_{\substack{S\subseteq[K]\\ |S|=m}}\sum_{t=1}^T r_t(S) - \EE{\sum_{t=1}^T r_t(S_t)},
\end{equation}
where the expectation is with respect to the randomness of the learner's decisions and of the feedback.

\begin{remark}
    The original formulation of adversarial combinatorial bandits~\citep{audibert2014regret}, both with full-bandit and semi-bandit feedback, allows a constrained action set $\mathcal{A}\subset {[K]\choose m}$, while we only work with the unconstrained action space of all $m$-subsets.~\citet{ito2019improved} refer to these as \emph{multitask bandits}.
\end{remark}

\section{REGRET IN THE \problem MODEL}
In this section, we present a learning algorithm for the \problem\ model, and prove a worst-case regret upper bound for it. We derive a lower bound on the regret of the \problem problem that matches our upper bound up to logarithmic factors.

\subsection{Algorithm}
The algorithm is an adaptation of Online Stochastic Mirror Descent (\osmd) as proposed by \cite{audibert2014regret} for combinatorial bandits with semi-bandit feedback. We maximize instead of minimizing, add a forced-exploration term, and replace the base-arm rewards with unbiased estimates obtained by combining the full-bandit feedback $r_t(S_t)$ and the winner feedback $I_t$.

Let $\mathcal{A}=\{a\in\{0,1\}^K\mid \|a\|_1=m\}$.
An incidence vector $a\in\mathcal{A}$ can be identified with a subset $S_a\subseteq[K]$ of size $m$ containing exactly those indexes $i$ such that $a(i)=1$. 
The rewards of the $K$ arms at time $t$ form a reward vector $r_t$ with elements $r_t(i)$ for $i\in[K]$, and the total reward of $S_t=S_{a_t}$ can be written as $r_t(S_t) = \langle a_t, r_t\rangle$. 

For $x\in (0,+\infty)^K$, let $F(x)=\sum_{i=1}^Kx_i(\log x_i - 1)$ denote the (normalized) negative entropy, our distance-generating function for the mirror ascent scheme.
Our method is called Online Stochastic Mirror Ascent with Winner feedback (\algname), outlined in Algorithm \ref{alg:osma}. Let $p_t$ be the distribution from which the algorithm draws incidence vectors (Line \ref{ll:pt}). With little abuse of notation, we write $p_t(a)$ or $p_t(S) = p_t(S_{a})$ interchangeably given the one-to-one correspondence between subsets of $[K]$ and incidence vectors. 
Reward estimates are computed as follows for all $i\in[K]$:

\begin{equation}\label{eq:estim}
    \widehat{r}_t(i) = \frac{\indi{i\in S_t}\indi{i=I_t}r_t(S_t)}{\pi_t(i)},   
\end{equation}
where $\pi_t(i)=\sum_{S\subseteq [K]}\indi{i\in S}p_t(S)$ is the marginal of $p_t$ over $i \in [K]$.

\begin{algorithm}
    \begin{algorithmic}[1]
        \Statex \textbf{Input: learning rate $\eta>0$, $\epsilon\in (0,1)$} 
        \State $x_1 \in \arg\min_{x\in\Conv(\mathcal{A})}F(x)$ \label{ll:init}
        \For{$t=1,\dots,T$}
        \State Find $\widetilde{p}_t\in\Delta(\mathcal{A})$ s.t. $x_t=\EEp{a\sim \widetilde{p}_t}{a}$ \label{ll:pt_tilde}
        \State $p_t=(1-\epsilon)\widetilde{p}_t + \frac{\epsilon}{|\mathcal{A}|}$ \label{ll:pt}
        \State Sample $a_t\sim p_t$ and play $S_t=S_{a_t}$ \label{ll:sample}
        \State Observe $r_t(S_t)$ and $I_t$
        \State Compute reward vector estimate $\widehat{r}_t$ as in Eq. \eqref{eq:estim}
        \State Find $\widetilde{x}_{t+1}$ s.t. $\nabla F(\widetilde{x}_{t+1}) = \nabla F(x_t)+\eta \widehat{r}_t$ \label{ll:mirror}
        \State $x_{t+1} \in \arg\min_{x\in\Conv(\mathcal{A})} D_{F}(x,\widetilde{x}_{t+1})$ \label{ll:project}
        \EndFor
    \end{algorithmic}
    \caption{\algname}\label{alg:osma}
\end{algorithm}

\subsection{Regret Upper Bound}

First, we establish some important properties of the reward estimator (Eq.~\ref{eq:estim}). Let $\mathcal{F}_t$ be the $\sigma$-algebra generated by $S_1,I_1,\dots,S_{t},I_t$, and let $\EEt{\cdot}$ be short for $\EE{\cdot\mid\mathcal{F}_{t-1}}$.

\begin{lemma}\label{lem:estim}
    The reward estimates from Equation \eqref{eq:estim} are non-negative, upper bounded by $\widehat{r}_t(i)\le K/\epsilon$ for all $i\in[K]$, conditionally unbiased (i.e., $\EEt{\widehat{r}_t}=r_t$), and their conditional variance satisfies:
    \begin{equation*}
        \EEt{\sum_{i=1}^K \pi_t(i)\widehat{r}_t(i)^2} \le mK,
    \end{equation*}
    where $p_t$ is the sampling probability defined in Algorithm \ref{alg:osma} (Line \ref{ll:pt}).
\end{lemma}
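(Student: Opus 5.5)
The plan is to condition on $\mathcal{F}_{t-1}$, so that $p_t$ and $\pi_t$ are fixed, and compute everything from the two-stage sampling structure: first $S_t\sim p_t$, then $I_t\sim q_t(\cdot\mid S_t)$.

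\textbf{Non-negativity and boundedness.} Non-negativity is immediate, since all factors in Eq.~\eqref{eq:estim} are non-negative. For the upper bound, note that the numerator is nonzero only if $i\in S_t$. In that case $r_t(S_t)\le m$, since it is a sum of $m$ rewards in $[0,1]$. For the denominator we use the forced exploration of Line~\ref{ll:pt}:
\[
\pi_t(i)\ge \frac{\epsilon}{|\mathcal{A}|}\,\bigl|\{S: i\in S\}\bigr| = \epsilon\,\frac{\binom{K-1}{m-1}}{\binom{K}{m}}=\frac{\epsilon m}{K}.
\]
Hence $\widehat{r}_t(i)\le m\cdot K/(\epsilon m)=K/\epsilon$.

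\textbf{Unbiasedness.} We take the expectation over $I_t$ first. For fixed $S$ containing $i$, we have $\mathbb{E}[\indi{I_t=i}\mid S_t=S]\, r_t(S)=q_t(i\mid S)\,r_t(S)$. The key step is the identity
\[
q_t(i\mid S)\,r_t(S)=r_t(i).
\]
When $r_t(S)>0$ this follows directly from Eq.~\eqref{eq:choice}. When $r_t(S)=0$, both sides vanish, because $r_t(i)\le r_t(S)=0$. This case is the only subtle point, and it is exactly why the degenerate branch of the choice model is harmless. Averaging over $S_t$ then gives
\[
\EEt{\widehat{r}_t(i)}=\sum_{S\ni i}p_t(S)\,\frac{r_t(i)}{\pi_t(i)}=r_t(i).
\]

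\textbf{Variance.} The indicators make cross terms irrelevant, so we compute
\[
\EEt{\pi_t(i)\widehat{r}_t(i)^2}=\frac{1}{\pi_t(i)}\sum_{S\ni i}p_t(S)\,q_t(i\mid S)\,r_t(S)^2=\frac{1}{\pi_t(i)}\sum_{S\ni i}p_t(S)\,r_t(i)\,r_t(S),
\]
again using the identity above. Bounding $r_t(i)\le 1$ and $r_t(S)\le m$ gives at most $m\sum_{S\ni i}p_t(S)/\pi_t(i)=m$. Summing over $i\in[K]$ yields $mK$.

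A sharper route would bound $\sum_i r_t(i)\,r_t(S)\indi{i\in S}$ by $r_t(S)^2\le m^2$, giving $m^2$ in expectation when the factor $1/\pi_t(i)$ is absent. The $1/\pi_t(i)$ weighting blocks this, however, so the crude termwise bound $mK$ is the one to state. I do not expect any step to be a real obstacle; the only care needed is the zero-total-reward case in the unbiasedness identity.
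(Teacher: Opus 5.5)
Your proposal is correct and follows essentially the same argument as the paper: forced exploration gives the lower bound $\pi_t(i)\ge \epsilon m/K$, unbiasedness follows by conditioning on $S_t$ and using the identity $q_t(i\mid S)\,r_t(S)=r_t(i)$ (valid also when $r_t(S)=0$), and the variance bound uses the same identity together with $r_t(S)\le m$. The only cosmetic difference is that the paper keeps the slightly sharper intermediate bound $m\sum_{i=1}^K r_t(i)$ before bounding it by $mK$, whereas you bound $r_t(i)\le 1$ termwise.
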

\begin{proof}
    The estimates are clearly non-negative since $r_t(i)\in[0,1]$ for all $i\in[K]$. 
    The upper bound comes from the fact that $r_t(S_t)\le m$ and:
    \begin{equation*}
        \pi_t(i) = \!\!\!\!\!\sum_{a\in\mathcal{A}:a(i)=1}\!\!\! p_t(a) 
        \ge \!\!\!\!\!\sum_{a\in\mathcal{A}:a(i)=1}\!\! \frac{\epsilon}{|\mathcal{A}|} 
        = \epsilon\frac{{K-1 \choose m-1}}{{K \choose m}} 
        = \frac{\epsilon m}{K}.
    \end{equation*}
    The conditional expectation is:
    \begin{equation*}
    \begin{aligned}
        \EEt{\widehat{r}_t(i)} &= 
        \EEt{\EEt{\widehat{r}_t(i)\mid S_t}} \\
        &=\frac{1}{\pi_t(i)}\EEt{\indi{i\in S_t}r_t(S_t)\EEt{\indi{i=I_t}\mid S_t}} \\
        &=\frac{1}{\pi_t(i)}\EEt{\indi{i\in S_t}r_t(S_t)q_t(i|S_t)} \\
        &=\frac{r_t(i)}{\pi_t(i)}\EEt{\indi{i\in S_t}} = r_t(i),
        \end{aligned}
    \end{equation*}
    by the law of total expectation, the definition of $q_t$ (Eq. \ref{eq:choice}), and the fact that $\pi_t(i)=\Pr(i\in S_t\mid \mathcal{F}_{t-1})$. Note that $q_t(i|S_t)r_t(S_t)=r_t(i)$ both when $r_t(S_t)$ is and is not zero. Similarly, for the conditional second moment:
    \begin{equation*}
    \begin{aligned}
        &\EEt{\sum_{i=1}^K \pi_t(i)\widehat{r}_t(i)^2} \\&\quad = 
        \EEt{\sum_{i=1}^K\frac{\indi{i\in S_t}\indi{i=I_t}r_t(S_t)^2}{\pi_t(i)}} \\
        &\quad=   \sum_{i=1}^K\frac{1}{\pi_t(i)}\EEt{\indi{i\in S_t}r_t(S_t)^2\EEt{\indi{i=I_t}\mid S_t}} \\
        &\quad= \sum_{i=1}^K\frac{r_t(i)}{\pi_t(i)}\EEt{\indi{i\in S_t}r_t(S_t)} \\
        &\quad\le  m\sum_{i=1}^K\frac{r_t(i)}{\pi_t(i)}\EEt{\indi{i\in S_t}} = m\sum_{i=1}^K r_t(i) \le mK,
    \end{aligned}
    \end{equation*}
    because $\EEt{\indi{i=I_t}\mid S_t} = q_t(i|S) = \frac{r_t(i)}{r_t(S)}$.
\end{proof}

Compare this with the standard semi-bandit feedback: having access to the base arm rewards $r_t(i)$ of all $i\in S_t$, we could compute estimates simply as $\widetilde{r}_t(i)=\indi{i\in S_t}r_t(i)/\pi_t(i)$ for all $i\in [K]$, whose variance is just (cf. \citealp{audibert2014regret}):
\begin{equation}
    \EEt{\sum_{i=1}^K \pi_t(i)\widetilde{r}_t(i)^2} = \sum_{i=1}^K r_t(i) \le K.
\end{equation}
The extra $m$ factor in the variance of $\widehat{r}_t$ is due to the stochasticity of the winner feedback $I_t$, which is needed to reconstruct the base arm rewards.

\begin{restatable}{theorem}{upper}\label{th:upper}
    Let $\eta=\min\left\{\sqrt{\frac{\log\frac{K}{m}}{2KT}}, \frac{3-e}{K}\right\}$ and $\epsilon=\eta K$.
    The expected total regret of \algname (Algorithm \ref{alg:osma}) with negative entropy over $T$ rounds is upper bounded as
    \begin{equation*}
        R_T \le 2m\sqrt{2KT\log\frac{K}{m}},
    \end{equation*}
    for the \problem problem with $K$ base arms and $m$-sized subsets.
\end{restatable}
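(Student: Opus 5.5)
The plan is to follow the standard \osmd analysis of \citet{audibert2014regret} for negative entropy, run in the mirror-ascent direction, with the forced exploration handled separately. For the fixed comparator $a^*\in\mathcal{A}$, write the regret as
\[
R_T=\sum_t\langle a^*-\EE{a_t},r_t\rangle ,
\]
with $\EEt{a_t}=(1-\epsilon)x_t+\epsilon\bar a$, where $\bar a$ is the uniform mean. This gives
\[
R_T\le \EE{\sum_t\langle a^*-x_t,r_t\rangle}+\epsilon\sum_t\langle x_t,r_t\rangle\le \EE{\sum_t\langle a^*-x_t,r_t\rangle}+\epsilon mT .
\]
By Lemma~\ref{lem:estim}, $\EEt{\widehat r_t}=r_t$ and $x_t$ is $\mathcal{F}_{t-1}$-measurable. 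Hence the first term equals $\EE{\sum_t\langle a^*-x_t,\widehat r_t\rangle}$, which reduces the problem to a full-information mirror-ascent regret bound against the estimated (non-negative) rewards.

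Next I invoke the standard \osmd/OMD inequality for Legendre $F$ with Bregman projection onto $\Conv(\mathcal{A})$:
\[
\sum_t\langle a^*-x_t,\widehat r_t\rangle\le \frac{F(a^*)-F(x_1)}{\eta}+\frac1\eta\sum_t D_{F^*}\bigl(\nabla F(x_t)+\eta\widehat r_t,\nabla F(x_t)\bigr).
\]
Since $x_1$ minimizes $F$ on $\Conv(\mathcal{A})$, it is the uniform point $m/K\cdot\mathbf 1$. A direct computation then gives $F(a^*)-F(x_1)\le m\log(K/m)$. For negative entropy the dual Bregman term is $\sum_i x_t(i)\bigl(e^{\eta\widehat r_t(i)}-1-\eta\widehat r_t(i)\bigr)$. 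Using $e^z-1-z\le z^2$ for $z\le 1$, it is at most $\eta^2\sum_i x_t(i)\widehat r_t(i)^2$. This requires $\eta\widehat r_t(i)\le 1$, and Lemma~\ref{lem:estim} gives $\eta\widehat r_t(i)\le \eta K/\epsilon=1$ with $\epsilon=\eta K$. The constraint $\eta\le(3-e)/K$ is presumably what makes the sharper inequality $e^z-1-z\le z^2/2\cdot\text{const}$ valid in this range, or what guarantees $\epsilon<1$. I will use whichever of $e^z\le 1+z+z^2$ on $z\le1$ or the paper's $(3-e)$ variant yields the stated constants.

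The key step, and the main obstacle, is relating $x_t$ to $\pi_t$ so that the variance bound of Lemma~\ref{lem:estim} applies. We have $\pi_t=(1-\epsilon)x_t+\epsilon m/K$, so $x_t(i)\le \pi_t(i)/(1-\epsilon)$. This gives
\[
\EEt{\sum_i x_t(i)\widehat r_t(i)^2}\le \frac{mK}{1-\epsilon},
\]
which introduces a $1/(1-\epsilon)$ factor. Alternatively, the factor $1-\epsilon$ can be absorbed by noting that the exploration mixture only helps: one bounds $\langle \EE{a_t},r_t\rangle\ge(1-\epsilon)\langle x_t,r_t\rangle$ and multiplies the OMD bound by $(1-\epsilon)$. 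In that case the variance term becomes $\eta\,mK$ exactly.

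Combining, the bound is
\[
R_T\le \frac{m\log(K/m)}{\eta}+\eta mKT+\epsilon mT=\frac{m\log(K/m)}{\eta}+2\eta mKT .
\]
With $\eta=\sqrt{\log(K/m)/(2KT)}$ this equals $2m\sqrt{2KT\log(K/m)}$. In the other case, $\eta=(3-e)/K$ forces $T\lesssim K\log(K/m)$. Then the trivial bound $R_T\le mT$ is already below the target, which I will check explicitly to finish.
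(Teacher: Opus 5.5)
Your proof is correct, and most of it coincides with the paper's: the same OMD inequality, $F(a^*)-F(x_1)=m\log(K/m)$, the exponential form of the dual Bregman term, $(1-\epsilon)x_t(i)\le\pi_t(i)$, and the trivial bound $R_T\le mT$ when the cap on $\eta$ binds. The genuine difference is how you dispose of the $\frac{1}{1-\epsilon}$ factor, and there you should commit to your second variant rather than hedge.

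With $z_t=(1-\epsilon)x_t+\epsilon\tfrac{m}{K}\mathbf{1}$, the paper uses the additive split $\langle a^*-z_t,r_t\rangle=\langle a^*-x_t,r_t\rangle+\epsilon\langle x_t-\tfrac{m}{K}\mathbf{1},r_t\rangle$. It then applies the sharper inequality $e^z-1-z\le(e-2)z^2$ for $z\le 1$, which gives a variance term $\frac{(e-2)\eta mKT}{1-\epsilon}$. The cap $\eta\le(3-e)/K$ is there precisely to give $1-\epsilon\ge e-2$, i.e.\ $\frac{e-2}{1-\epsilon}\le 1$; it has nothing to do with validating the exponential bound.

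Your first variant keeps the $\frac{1}{1-\epsilon}$ factor with constant $1$. It gives $\bigl(1+\frac{1}{1-\epsilon}\bigr)\eta mKT>2\eta mKT$ and misses the stated constant.

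Your second variant uses the multiplicative split $\langle a^*-z_t,r_t\rangle=(1-\epsilon)\langle a^*-x_t,r_t\rangle+\epsilon\langle a^*-\tfrac{m}{K}\mathbf{1},r_t\rangle\le(1-\epsilon)\langle a^*-x_t,r_t\rangle+\epsilon m$. The factor $1-\epsilon$ then cancels exactly against $(1-\epsilon)x_t(i)\le\pi_t(i)$, and it merely shrinks the $\frac{m\log(K/m)}{\eta}$ term. So the cruder bound $e^z-1-z\le z^2$ (still valid, as $e-2<1$) already yields $\frac{m\log(K/m)}{\eta}+\eta mKT+\epsilon mT$.

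What your route buys is that the cap on $\eta$ is needed only to keep $\epsilon<1$ (any $\eta<1/K$ would do), decoupled from the constant in the exponential inequality. The paper's route buys a direct reuse of the generic decomposition of Lemma~\ref{lem:osma}.

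The capped case closes the same way in both. The condition $\sqrt{\log(K/m)/(2KT)}>(3-e)/K$ forces $T<\frac{K\log(K/m)}{2(3-e)^2}<7K\log(K/m)$, and $mT\le 2m\sqrt{2KT\log(K/m)}$ holds whenever $T\le 8K\log(K/m)$.
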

\begin{proof}[Proof Sketch]
    We adapt the analysis of \osmd with negative entropy for semi-bandit feedback (\citealp{audibert2014regret}), maximizing instead of minimizing and taking into account the extra factor $m$ in the variance of the base-arm reward proxies, which produce an extra $\sqrt{m}$ factor in the regret. Forced exploration is needed to upper-bound the reward estimates and, in turn, the exponential terms in the Bregman divergence. Setting its rate to $\epsilon=\eta K$ controls both these terms and the exploration cost, and optimizing the learning rate $\eta$ yields the claimed result. The full proof is deferred to Appendix~\ref{app:upper}.
\end{proof}

\begin{remark}
    If we worked with losses $\ell_t(i)=1-r_t(i)$ instead of rewards, the unbiased loss estimates would be:
    \begin{equation*}
        \widehat{\ell}_t(i) = 1-\frac{\indi{i\in S_t}\indi{i=I_t}(m-\ell_t(S_t))}{\pi_t(i)}.
    \end{equation*}
    These would be naturally bounded, making forced exploration unnecessary, but would lose the non-negativity required by the \osmd analysis. 
    This is also what prevents us from using the INF regularizer to remove the logarithmic factors as in~\citep{audibert2014regret}.
    To overcome this problem we may consider ``loser feedback'' 
 ($q_t$ proportional to losses instead of rewards), but that would be a conceptually very different interaction model.
\end{remark}

Compare this with state-of-the-art algorithms for closely related settings: for adversarial combinatorial bandits with full bandit feedback, \textsc{EXP-2} with John's exploration achieves $\widetilde{O}(m^{3/2}\sqrt{KT})$; for semi-bandit feedback, \osmd with an INF-like regularizer achieves $\mathcal{O}(\sqrt{mKT})$. In terms of dependence from subset size $m$, our result lies in between, suggesting that the extra winner feedback $I_t$ makes the full bandit problem easier, but not as easy as in the presence of semi-bandit feedback. In light of the $\Omega(m^{3/2}\sqrt{KT})$ lower bound by \cite{ito2019improved}, our Theorem \ref{th:upper} already establishes a separation between full bandit and \problem. In the next section, we show that a separation from semi-bandit also holds.

\subsection{Regret Lower Bound}

\begin{restatable}{theorem}{lowerbx}\label{th:lower2}
    Let $m\ge 2$, $K\ge 2m$, and $T>K/27$. For any strategy of the learner, there exists an instance of \problem such that
    \begin{equation*}
        R_T \ge \frac{m\sqrt{KT}}{1728}.
    \end{equation*}
\end{restatable}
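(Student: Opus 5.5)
We prove the bound with a stochastic construction in the style of the multitask lower bounds of \cite{audibert2014regret} and \cite{ito2019improved}, modified so that the winner index $I_t$ leaks as little as possible. Split $[K]$ into $m$ blocks $B_1,\dots,B_m$ of size $k=\lfloor K/m\rfloor\ge 2$. The instance is indexed by $j=(j_1,\dots,j_m)$ with $j_\ell\in B_\ell$. A naive choice (Bernoulli rewards per arm) fails, because $r_t(S)$ together with $I_t$ essentially reveals which arms in $S$ were nonzero. Instead we draw a single shared random variable per round and make all rewards in a round nearly equal, say $r_t(i)=Z_t\,(1/2+\delta\indi{i=j_\ell})$ for $i\in B_\ell$, with $Z_t\in\{0,1\}$ common or some similar common-noise device.

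This gives a total reward $r_t(S)=Z_t(m/2+\delta N_j(S))$, where $N_j(S)$ is the number of special arms in $S$. Scaling by a common factor hides the individual arms, so the only per-arm information comes through the winner probabilities $q(i|S)\propto 1/2+\delta\indi{i \text{ special}}$. If the deterministic total $m/2+\delta N$ revealed $N$ exactly we would lose. We therefore add noise to the sum, for instance $r_t(i)=\frac12+\delta\indi{i=j_\ell}+\xi_t(i)$ with small independent perturbations, or a common multiplicative Gaussian-like factor. The aim is that the KL divergence of the sum feedback scales like $\delta^2 N$ differences divided by $m$, i.e. the full-bandit rate of \cite{ito2019improved}.

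\textbf{Restricted play.} We may restrict the learner to sets $S$ that contain exactly one arm per block. Any set violating this incurs constant regret per violation on the relevant blocks, and this is handled by a standard reduction. The regret then decomposes over blocks as $\delta\sum_\ell\sum_t\Pr(S_t\cap B_\ell\ne\{j_\ell\})$.

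\textbf{Per-block comparison.} For each block $\ell$ and each $j$, compare the distribution $P_j$ with $P_{j^{-\ell}}$, the instance where block $\ell$ has no special arm. By Pinsker, averaged over $j_\ell\in B_\ell$,
\[
\frac1k\sum_{j_\ell}\mathbb{E}_j[T_{j_\ell}]\le \frac Tk+T\sqrt{\tfrac12\cdot\tfrac1k\sum_{j_\ell}\mathrm{KL}\bigl(P_{j^{-\ell}},P_j\bigr)}.
\]
The per-round feedback is the pair (sum, winner). By the chain rule, the KL splits into a sum part, which is $O(\delta^2/m)$ per round when the sum noise has variance of order $m$, and a winner part. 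The winner part is a KL between categorical distributions whose probabilities differ by roughly $\delta/m$ on a single coordinate of mass about $1/m$, so it is $O\bigl((\delta/m)^2\cdot m\bigr)=O(\delta^2/m)$, and it is incurred only when $j_\ell\in S_t$. In both parts the total over time is $O\bigl(\delta^2\,\mathbb{E}[T_{j_\ell}]/m\bigr)$, and summing over $j_\ell$ gives $O(\delta^2T/m)$.

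\textbf{Tuning.} The regret is then at least $\delta m\,T\bigl(1-1/k-c\sqrt{\delta^2T/(mk)}\bigr)$. Choosing $\delta\approx\sqrt{mk/T}=\sqrt{K/T}$ yields $R_T\gtrsim mT\sqrt{K/T}=m\sqrt{KT}$. The condition $T>K/27$ ensures $\delta$ is small enough to keep the rewards in $[0,1]$, and this is where the constant $1728$ comes from.

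\textbf{Main obstacle.} The hard step is the KL bound for the joint feedback. The winner index and the sum are correlated through the common reward realization, and the sum noise has to be designed so that it neither reveals $N_j(S)$ nor breaks the Plackett--Luce computation. My first attempt would take per-arm rewards Bernoulli-like with a common random scale, compute the exact conditional law of $I_t$ given $r_t(S_t)$, and bound the KL term by term with $\chi^2$ divergences.
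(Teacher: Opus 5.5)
Your proposal has two genuine gaps. The block reduction does not apply to this action set, and the central information bound is left unproved; the constructions you suggest cannot deliver it.

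\textbf{Block reduction.} Here the learner may play any set in $\binom{[K]}{m}$. With additive utility, nothing penalizes taking two arms from one block and none from another: under your rewards such a set loses only $\delta$ per missed special arm, not a constant. This is exactly why the construction of \citet{audibert2014regret} needs constrained action sets. Without the restriction, your step $\sum_{j_\ell\in B_\ell}\mathbb{E}_{j^{-\ell}}[T_{j_\ell}]\le T$ fails, because the learner can place up to $\min\{m,k\}$ arms in $B_\ell$ every round. Since the reference models $P_{j^{-\ell}}$ differ across blocks, you also cannot recover it from $\sum_i T_i=mT$. The paper instead follows \citet{ito2019improved}. It draws a uniformly random set of $n=\lfloor m/2\rfloor$ good arms via a random permutation $\sigma$, and compares with the model where $\sigma(n)$ is demoted. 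Under the alternative, $\sigma(n)$ is exchangeable with the other $K-n+1$ non-good arms, so its expected play count is at most $mT/(K-n+1)\le 2T/3$ when $K\ge 2m$. Using only about $m/2$ good arms is what keeps this bound away from $T$.

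\textbf{Information bound.} The per-round $O(\delta^2/m)$ KL bound is the heart of the proof, and the constructions you list do not give it. Rewards lie in $[0,1]$, so any noise is bounded. A location shift ($\frac12+\delta$ plus bounded additive or multiplicative noise) gives laws of $r_t(S_t)$ with different supports under the two models. Then KL and $\chi^2$ are infinite, so your Pinsker/chain-rule argument cannot start. With a common uniform offset, even TV and squared Hellinger are of order $\delta/m$ per round, which is linear rather than quadratic in $\delta$. The $Z_t\in\{0,1\}$ variant reveals $N_j(S)$ outright.

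The missing idea is to encode goodness in a mixing weight rather than a location. The paper takes $r_t(i)=\frac14\beta_t(i)+U_t$, with $\beta_t(i)\sim\mathrm{Ber}(\frac14+4\varepsilon)$ for good arms, $\mathrm{Ber}(\frac14)$ otherwise, and a shared $U_t\sim\mathrm{Unif}[\frac14,\frac12]$. Let $\nu_0,\nu_1$ be the laws of $(r_t(S_t),I_t)$ given the demoted arm's Bernoulli outcome. The two feedback laws are then $(1-p)\nu_0+p\nu_1$ and $(1-p-\delta)\nu_0+(p+\delta)\nu_1$, mixtures of the same two components. Lemma~\ref{lem:mixtures} therefore gives $\mathrm{KL}\le 2\delta^2\,\mathrm{TV}(\nu_0,\nu_1)/\min\{p+\delta,1-p-\delta\}$, with no support mismatch.

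The remaining work, done in Lemma~\ref{lem:kl}, is to show $\mathrm{TV}(\nu_0,\nu_1)=O(1/m)$. The sum is uniform on an interval of length $m/4$, shifted by $1/4$ between the two components. On the overlap, the Plackett--Luce probabilities given the sum differ in TV by at most $1/m$. Note that, conditionally on that Bernoulli outcome, the winner's law moves by $\Theta(1/m)$, not $\delta/m$ as your heuristic suggests. The $\delta^2$ factor comes entirely from the mixture weights.
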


\begin{proof}[Proof Sketch]
    We adapt the proof by \citet{ito2019improved} for multitask bandits with full-bandit feedback. The presence of the extra winner feedback requires a novel information-theoretic argument.
    The backbone of the proof is an averaging argument over all possible choices of $\lfloor m/2\rfloor$ good arms. Rewards are $r_t(i)=\frac{1}{4}\beta_t(i)+U_t$ where $\beta_t(i)\sim \ber(1/4+4\varepsilon\indi{\text{$i$ is a good arm}})$ and $U_t\sim\unif([1/4,1/2])$. 
    Each of these models is compared with an alternative model where one good arm is removed.
    The shared uniform noise is to make sure the winner's feedback doesn't leak too much information.
    The challenging step is showing that selecting the removed good arm incurs only $\mathcal{O}(\varepsilon^2/m)$ KL divergence per round. 
    The subtle extra information carried by the winner's index makes this significantly more involved compared to the full-bandit setting~\citep{audibert2014regret,CohenHK17,ito2019improved}.
    A complete proof is provided in Appendix~\ref{app:lower}.
\end{proof}
\begin{remark}
    The $\Omega(m\sqrt{KT})$ lower bound is of the same order of the one originally given by~\citet{audibert2014regret} for full-bandit feedback. It was not tight for their setting, as later showed by \citet{CohenHK17}. Neither result is comparable with ours, because their constructions require constrained action spaces. The correct comparison is with the $\Omega(m^{3/2}\sqrt{KT})$ lower bound by~\citet[][Theorem 1]{ito2019improved}.
\end{remark}
The regret lower bound from Theorem~\ref{th:lower2} matches the upper bound of our \algname algorithm (Thm.~\ref{th:upper}) up to logarithmic factors.
In light of the existing $\mathcal{O}(\sqrt{mKT})$ regret upper bounds for \osmd under semi-bandit feedback~\citep{audibert2014regret}, this shows that \problem is harder than semi-bandit.

\section{REGRET IN RELATED MODELS}\label{sec:other}

In this section, we establish regret upper and lower bounds for relevant variants of \problem with different utility, feedback, or both. The emerging picture is that \problem is a particularly favorable combination of utility and feedback.
Complete proofs can be found in Appendix~\ref{app:profit}. See Table~\ref{tab:summary} for a summary of the results.

We start by showing that observing the sum of rewards is necessary when it is the utility. Even observing the winner's index \emph{and} reward leads to linear regret.

\begin{restatable}{theorem}{linearlb}\label{th:linearlb}
    Consider a variant of \problem where the learner observes $(I_t,r_t(I_t))$ but does not observe $r_t(S_t)$. Let $m\ge 3$ and $K\ge 2m$. For any strategy of the learner, there exists an instance of this problem such that
    \begin{equation*}
        R_T \ge \frac{\min\{4(K-2-m), (m-2)(m-1)\}}{8(4(K-2)+(m-2)(m-1))} \: T.
    \end{equation*}
\end{restatable}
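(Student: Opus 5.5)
The plan is a two-instance indistinguishability argument. I will build two stochastic reward distributions $\mathcal{D}$ and $\mathcal{D}'$ over $[0,1]^K$, with vectors drawn i.i.d.\ across rounds, that satisfy two properties. First, for every $m$-set $S$ the law of the feedback pair $(I_t,r_t(I_t))$ is the same under both. Second, the two distributions have different optimal $m$-sets, with a gap of order $T$ in expected utility. Since the learner's feedback has identical law under both instances for every action it might play, its distribution over sequences $S_1,\dots,S_T$ is identical too. The sum of the two regrets is then at least $T$ times the minimum over $S$ of the summed per-round suboptimality gaps $\Delta(S)+\Delta'(S)$. Taking the worse of the two instances gives half of that. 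An oblivious adversary can then be extracted from the stochastic instance in the usual way, since the regret against a fixed best set in expectation lower-bounds the expected pseudo-regret.

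The feedback law is fully described by the identity
\[
\Pr(I=i,\ r(i)\in dx\mid S)=\mathbb{E}\!\left[\frac{x\,\indi{r(i)\in dx}}{r(S)}\right],\qquad i\in S .
\]
So the learner only sees expectations of the form $\mathbb{E}[r(i)\,g(r(i))/r(S)]$. These involve the harmonic-type quantity $1/r(S)$, which depends on correlations among arms, whereas the utility $\mathbb{E}[r(S)]$ is linear in the marginals. My construction will exploit this mismatch. I will use two special arms (say $1$ and $2$), $m-2$ further "filler" slots, and $K-2$ ordinary arms, with rewards taking few values, e.g.\ $0$ or a fixed level. I will correlate the arms through a latent random event: for example, with some probability all ordinary arms in the set vanish and a special arm carries all the reward. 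In one instance the extra mass sits on sets containing both special arms; in the other it sits on sets avoiding them. The mixture weights are chosen so that, conditional on each arm winning with a given reward value, the probability is the same in both instances.

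The form of the bound, $\min\{4(K-2-m),(m-2)(m-1)\}/(8(4(K-2)+(m-2)(m-1)))$, suggests that the mixing weights are normalized by $4(K-2)+(m-2)(m-1)$. Here the term $(m-2)(m-1)$ would come from counting pairs among the filler arms, and $4(K-2)$ from the ordinary arms. The gap would then be the smaller of the two advantages, one for choosing the special pair and one for avoiding it, divided by $8$. This is why $m\ge 3$ and $K\ge 2m$ are needed: both counts must be positive.

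The main obstacle is the exact matching of feedback laws simultaneously for all $\binom{K}{m}$ sets $S$, not just for the optimal ones. I will first enforce symmetry: ordinary arms are exchangeable and the construction depends on $S$ only through $|S\cap\{1,2\}|\in\{0,1,2\}$ and the number of ordinary arms in $S$. This reduces the requirement to three families of equations in a few parameters. I will then solve these equations for the mixture weights and check that the resulting weights are non-negative and that rewards stay in $[0,1]$. Once this is done, computing $\mathbb{E}[r(S)]$ under both instances and minimizing $\Delta(S)+\Delta'(S)$ over the three types of sets is routine and should yield the stated constant.
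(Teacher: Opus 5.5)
\paragraph{Missing construction.} Your outer argument is the paper's: two stochastic instances with the same feedback law for every $m$-set, a per-round average gap $\min_S(\Delta(S)+\Delta'(S))/2>0$, then derandomization. But it is only the shell. The content of the theorem is the explicit pair of instances, and you never produce it. You cannot ``solve these equations for the mixture weights'' before the support of the distributions is fixed. Once it is fixed, the matching conditions generally outnumber the free weights, so existence is not automatic. In particular, nothing shows that your suggested structure (optimum containing both special arms in one instance and avoiding them in the other) can be realized.

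The paper sets $n=K-2$, $\alpha=\frac{n-m+1}{n}$, $\beta=\frac{m(m-1)}{2n}$, $Z=1+\alpha+\beta$. Under $P$:
\begin{itemize}
\item with probability $1/Z$, arm $2$ and one uniformly random common arm have reward $1$, all others $0$;
\item with probability $\alpha/Z$, only arm $1$ has reward $1$;
\item with probability $\beta/Z$, arm $1$ and all common arms have reward $1$.
\end{itemize}
The model $Q$ swaps arms $1$ and $2$. The swap handles sets with $S\cap\{1,2\}=\{2\}$, and exchangeability plus normalization handles the common arms. What remains are three equations in $(\alpha,\beta)$:
\begin{itemize}
\item $\alpha=1-\frac{m-1}{n}$, from zero total reward when $S\cap\{1,2\}=\{1\}$;
\item $\alpha+\frac{\beta}{m}=1-\frac{m-1}{2n}$, from arm $1$ winning with reward $1$ when $S\cap\{1,2\}=\{1\}$;
\item $\alpha+\frac{\beta}{m-1}=1-\frac{m-2}{2n}$, from the same event when $\{1,2\}\subseteq S$.
\end{itemize}
These happen to be consistent for this particular support. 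That coincidence is exactly the crux your proposal leaves open.

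\paragraph{Missing mixing step.} Matched feedback is not sufficient by itself, and the paper needs a step you do not anticipate. For the $P,Q$ above, the means are $a=\frac{\alpha+\beta}{Z}$ for the favoured special arm, $b=\frac{1}{Z}$ for the other one, and $c=\frac{1/n+\beta}{Z}$ for every common arm. When $b>c$ (e.g.\ $m=3$, $K\ge 7$), any set containing both special arms is optimal under both models, so a learner can have zero regret.

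The paper therefore mixes each model, with weight $3/4$, with a point mass giving reward $1$ to a fixed set $G$ of $m-1$ common arms. The optima become $G\cup\{1\}$ and $G\cup\{2\}$, and every action has average gap at least $\gamma=\frac{1}{4}\min\{a-c,\frac{a-b}{2}\}$. Since $a-b=\frac{(m-1)(m-2)}{2nZ}$, $a-c=\frac{n-m}{nZ}$ and $2nZ=4(K-2)+(m-2)(m-1)$, this is exactly the stated slope. Your heuristic reading of the constant, as a trade-off between choosing and avoiding the special pair, does not match the actual mechanism.
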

For example, if $K=6$ and $m=3$, $R_T\ge 0.01T$. However, note that the slope vanishes as $K/m^2\to\infty$.
\begin{proof}[Proof Sketch]
    We partition the base arms into special arms $\{1,2\}$ and $K-2$ common arms, $m-1$ of which are good. We construct two reward distributions $P,Q$ differing only by exchanging the roles of arms $1$ and $2$: the optimal action contains $1$ under $P$ and arm $2$ under $Q$. We show that the feedback $(I_t,r_t(I_t))$ has the same law under the two models for every played action, although every action has a strictly positive average suboptimality gap across the two models, hence at least one model yields linear regret.
\end{proof}

Next, we look at the related (idealized) Multiplayer MAB setting introduced by~\citet{AlaturL020}, which is essentially \problem where the choice model $q_t$ is replaced by a uniform distribution over the played subset $S_t$. \citet{AlaturL020} actually use losses $\ell_t(i)\in[0,1]$, but this is equivalent to our reward formulation by letting $r_t(i)=1-\ell_t(i)$. Under the uniform sampling, this does not affect the ordinal feedback. \citet{AlaturL020} provide an EXP3-like algorithm for the Idealized Multiplayer MAB with a $O(m\sqrt{KT\log K})$ regret upper bound. They do not provide a lower bound. We prove a matching (up to logarithmic factors) lower bound by adapting our proof of Theorem~\ref{th:lower2}. The main difference is the KL upper bound, which is actually much easier to show under uniform sampling. The following result applies to the original Idealized Multiplayer MAB by letting $\ell_t(i)=1-r_t(i)$.

\begin{restatable}{theorem}{lowermulti}\label{th:lowermulti}
    Consider a variant of \problem where the feedback is $Y_t=(J_t, r_t(J_t))$, where $J_t \sim \unif(S_t)$.
    Let $m\ge 2$, $K\ge 2m$, and $T>K/3$. For any strategy of the learner, there exists an instance of this problem such that
    \begin{equation*}
        R_T \ge \frac{m\sqrt{KT}}{576}.
    \end{equation*}
\end{restatable}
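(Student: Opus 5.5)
We follow the same template as the proof of Theorem~\ref{th:lower2}, itself adapted from \citet{ito2019improved}. The difference is that the KL step is easier, since the identity of the revealed arm now carries no information. Set $k=\lfloor m/2\rfloor$ and take a set $G\subseteq[K]$ of $k$ good arms. Draw rewards i.i.d.\ over rounds as $r_t(i)\sim\ber(1/2+\varepsilon\indi{i\in G})$; no shared uniform noise is needed, because $J_t$ is independent of the rewards given $S_t$. Call this model $P_G$. For each $j\in G$ define the alternative model $P_{G\setminus\{j\}}$, in which arm $j$ is demoted to $\ber(1/2)$.

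\textbf{Regret decomposition.} Let $N_j$ be the number of rounds in which $j\in S_t$. Under $P_G$ the optimal set contains $G$, so
\[
R_T\ge \varepsilon\Big(kT-\sum_{j\in G}\mathbb{E}_G[N_j]\Big).
\]
We average over a uniformly random $G$ and use the averaging argument of \citet{ito2019improved} (their Theorem~1) to compare $\mathbb{E}_G[N_j]$ with $\mathbb{E}_{G\setminus\{j\}}[N_j]$. Pinsker's inequality and the chain rule for KL divergence give
\[
\big|\mathbb{E}_G[N_j]-\mathbb{E}_{G\setminus\{j\}}[N_j]\big|\le T\sqrt{\tfrac12\,\mathrm{KL}\big(P_{G\setminus\{j\}}^{\text{hist}},P_G^{\text{hist}}\big)},
\]
where the KL divergence is $\sum_t\mathbb{E}_{G\setminus\{j\}}\big[\mathrm{KL}_t(S_t)\big]$ with per-round terms $\mathrm{KL}_t(S_t)$. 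Under the model with $k-1$ good arms, a counting argument as in \citet{ito2019improved} gives
\[
\sum_{G\ni j}\ \mathbb{E}[N_j]\le \frac{mT}{K-k+1}\cdot\#\{\text{such }G\}.
\]
A randomly chosen missing arm is therefore played only about $mT/K$ times on average.

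\textbf{Per-round KL.} For a fixed $S$, the feedback is the pair $(J,r(J))$ with $J\sim\unif(S)$. Its law differs between the two models only on the event $J=j$, which has probability $1/m$. Hence
\[
\mathrm{KL}_t(S)=\frac{\indi{j\in S}}{m}\,\mathrm{KL}\big(\ber(1/2),\ber(1/2+\varepsilon)\big)\le \frac{c\,\varepsilon^2}{m}\,\indi{j\in S},
\]
for an absolute constant $c$ and $\varepsilon\le1/4$. This is the step that required the involved argument in Theorem~\ref{th:lower2}; here it is immediate, because $J$ is independent of the rewards.

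\textbf{Putting it together.} Summing over rounds and averaging over $j$ and $G$ with Jensen's inequality, the total KL is at most of order $(\varepsilon^2/m)(mT/K)=\varepsilon^2T/K$. It follows that
\[
\mathbb{E}_G[N_j]\le \frac{mT}{K}+ cT\varepsilon\sqrt{T/K}.
\]
With $K\ge 2m$ this yields, on average over $G$,
\[
R_T\ge \varepsilon k T\big(c_1-c_2\,\varepsilon\sqrt{T/K}\big).
\]
Choosing $\varepsilon\asymp\sqrt{K/T}$ gives $R_T\gtrsim kT\sqrt{K/T}\asymp m\sqrt{KT}$. The condition $T>K/3$ ensures $\varepsilon\le 1/4$, so the Bernoulli parameters stay valid.

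The main obstacle is the bookkeeping that yields the explicit constant $1/576$. This involves the averaging over $\binom{K}{k}$ sets, the handling of $m$ odd versus even through $\lfloor m/2\rfloor\ge m/3$, and the KL constant of Bernoulli distributions near $1/2$. We expect to track these exactly as in the proof of Theorem~\ref{th:lower2}, replacing its $\mathcal{O}(\varepsilon^2/m)$ KL bound with the simpler one above. The gain in constant from $1/1728$ to $1/576$ comes from dropping the shared uniform noise: the reward scale is no longer shrunk, and the condition $T>K/27$ relaxes to $T>K/3$.
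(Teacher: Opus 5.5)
Your proposal is correct and follows the paper's proof. Steps 1--3 of Theorem~\ref{th:lower2} are reused unchanged, and only the per-round KL is replaced, via the chain rule over the uniform index $J_t$ (Lemma~\ref{lem:kluniform} in the paper), by a bound of order $\varepsilon^2/m$. Your displayed counting inequality is false for a fixed $j$ (a learner that always plays $j$ breaks it). It holds only as an average over both $G$ and $j\in G$, which is what the paper gets from the uniformly random permutation $\sigma$.

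The one deviation is the construction. The paper keeps the shared-noise rewards $r_t(i)=\frac14\beta_t(i)+U_t$, so its improvement from $1/1728$ to $1/576$ comes solely from the KL constant $\frac{128\varepsilon^2}{3m}$ replacing $\frac{384\varepsilon^2}{m}$, not from dropping the noise. Your plain $\ber(1/2+\varepsilon)$ rewards give a per-round KL of at most $\frac{8\varepsilon^2}{3m}$. With the same bookkeeping, taking $\kappa=K-\lfloor m/2\rfloor+1$ and $\varepsilon=\frac{1}{12}\sqrt{3\kappa/T}$, this yields the stronger bound $R_T\ge m\sqrt{KT}/144$ under the same condition $T>K/3$.
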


This confirms that Multiplayer MAB is harder than semi-bandit, and that it has the same rate of \problem even though their feedback models are not comparable (they cannot simulate each other). 
More interestingly, we can compare the aforementioned $\mathcal{O}(m\sqrt{KT})$ upper bound for Multiplayer MAB with Theorem~\ref{th:linearlb}. Together, they show that, when the sum of rewards $r_t(S_t)$ is not observed, observing the reward of a random arm is preferable (in a worst-case sense) to observing the reward of the winner.

We now turn to a utility function that is more typical of MNL bandits: the expected reward of the winner. We study this under the same choice model of \problem: Plackett-Luce with adversarial linear attraction parameters equal to the rewards and no no-purchase option. Unfortunately, under the same feedback model of \problem, the regret is linear. 

\begin{restatable}{theorem}{profitlb}\label{th:profitlb}
    Consider a variant of \problem in which, for every
    $S\subseteq[K]$ with $|S|=m$, the objective function is the expected
    reward of the winner:
    \begin{equation*}
        \rho_t(S)
        \coloneqq
        \sum_{i\in S}q_t(i\mid S)r_t(i)
        =
        \begin{cases}
            \displaystyle
            \frac{\sum_{i\in S}r_t(i)^2}{r_t(S)},
            & r_t(S)>0,\\[2mm]
            0,
            & r_t(S)=0,
        \end{cases}
    \end{equation*}
    and the regret is defined accordingly as
    \begin{equation*}
        R_T
        =
        \max_{\substack{S\subseteq[K]\\ |S|=m}}
        \sum_{t=1}^T\rho_t(S)
        -
        \EE{\sum_{t=1}^T\rho_t(S_t)},
    \end{equation*}
    but the feedback remains $Y_t=(r_t(S_t),I_t)$. Let $K>m\geq2$.
    For any strategy of the learner, there exists an oblivious reward
    sequence such that
        $R_T\geq\frac{T}{444}$.
\end{restatable}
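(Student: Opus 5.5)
The plan is an indistinguishability argument, in the same spirit as the proof sketch of Theorem~\ref{th:linearlb}. I will construct two i.i.d.\ (across rounds) reward distributions $P$ and $Q$ on $[0,1]^K$ such that, for every $m$-set $S$, the law of the feedback $Y_t=(r_t(S),I_t)$ is the same under $P$ and $Q$. At the same time, the expected utilities $\mathbb{E}[\rho_t(S)]$ should rank the $m$-sets differently under the two models. Then the learner's action distribution in every round is identical under $P$ and $Q$. I will show that every $m$-set has average suboptimality gap $\frac12(\Delta_P(S)+\Delta_Q(S))\ge c$ for an absolute constant $c$. This gives $\max\{R_T^P,R_T^Q\}\ge cT$, and a standard averaging step over the reward randomness turns it into a fixed oblivious sequence with $R_T\ge T/444$.

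\textbf{Key reduction.} For a fixed $S$, the joint law of $(r(S),I)$ is
\[
\Pr\bigl(r(S)\in dv,\ I=i\bigr)=\mathbb{E}\bigl[q(i\mid S)\,\mathbb{1}\{r(S)\in dv\}\bigr].
\]
When $r(S)=v>0$ this equals $\mathbb{E}[r(i)\,\mathbb{1}\{r(S)\in dv\}]/v$. So the feedback law is determined by two objects: the law of $r(S)$, and the conditional first moments $\mathbb{E}[r(i)\mid r(S)]$ for $i\in S$. By contrast, $\mathbb{E}[\rho(S)]=\mathbb{E}\bigl[\sum_{i\in S}r(i)^2/r(S)\bigr]$ involves conditional second moments, which the feedback does not pin down. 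The construction should therefore match first-order conditional information while changing the conditional spread of individual arms.

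\textbf{Gadget and padding.} Arms with identically zero reward never win when $r(S)>0$. They also contribute nothing to $r(S)$ or to $\rho(S)$. I will therefore put a gadget on a small number of arms, for example $m+1$ arms or a few arms plus filler, and set all other arms to zero in both models. Within the gadget, both distributions will be finitely supported mixtures over a few reward vectors.

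The requirement ``same law of $(r(S),I)$ for all $S$'' becomes a finite system of linear equations in the mixture weights: one equation per (set, value of $r(S)$, winner) triple. A natural first attempt uses two special arms $1,2$ whose roles are swapped between $P$ and $Q$. In $P$, arm $1$ carries a spread-out reward and arm $2$ a concentrated one, for example $\{0,1\}$-valued versus constant $1/2$. Other gadget arms are correlated so that every sum $r(S)$ and every conditional mean $\mathbb{E}[r(i)\mid r(S)]$ is symmetric under the swap. A spread-out arm raises $\sum r(i)^2/r(S)$, so the optimal set contains arm $1$ under $P$ and arm $2$ under $Q$. Every set not containing exactly one of the special arms will need a gap coming from the other gadget arms; I would tune the common arms, in the way the $m-1$ good arms are used in Theorem~\ref{th:linearlb}, so that these sets are also strictly suboptimal in both models.

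\textbf{Main obstacle.} The hard step is to satisfy the linear equalities for all $\binom{K}{m}$ sets simultaneously, including sets that contain only one special arm. For such sets, the law of $r(S)$ itself must not reveal whether that arm is the spread-out one. I would first search for a symmetric gadget with $m=2$ and $K=3$ by solving the linear system explicitly. I would then extend it to general $m$ by adding arms whose rewards are deterministic functions of a shared latent variable, so that each $r(S)$ is a function of that latent variable. If this fails, I would use the padding with zero arms to reduce general $(K,m)$ to the $m+1$ gadget arms. The constant $1/444$ then comes from computing the minimum average gap of this explicit gadget.
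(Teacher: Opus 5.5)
\textbf{Verdict: genuine gap.} Your framework matches the paper's: two i.i.d.\ models that induce the same conditional feedback law for every $m$-set, an averaged per-round gap, and a final derandomization. Your observation is also the right intuition: the feedback only pins down the law of $r(S)$ and the conditional means of $r(i)$ given $r(S)$, while $\rho$ depends on conditional second moments. But the whole content of the theorem is the explicit pair of models. You do not supply it, and the concrete heuristics you propose would not produce one.

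\textbf{What zero-padding forces.} Whenever an $m$-set can consist of one nonzero arm $i$ plus $m-1$ zero fillers (exactly the regime your padding creates), you get $r(S)=r(i)$ and $\rho(S)=r(i)$. Hence:
\begin{itemize}
\item the marginal law of every nonzero arm must coincide under $P$ and $Q$;
\item such sets have the same expected utility in both models.
\end{itemize}
This rules out your ``first attempt'' (arm $1$ spread-out, e.g.\ $\{0,1\}$-valued, and arm $2$ constant $\frac{1}{2}$, with roles swapped). The set $\{1\}\cup$ fillers already distinguishes the two models. It also shows that no separation can come from sets meeting the gadget in a single arm.

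\textbf{Padding is not a reduction.} A gadget found for $m=2$, $K=3$ (or $K=m+1$) never exposes singletons. Once $K$ is large, $m$-sets meet the gadget in subsets of every size, and all of these must be matched. Without an actual construction, the constant $\frac{1}{444}$ is also unsupported.

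\textbf{The missing idea: a two-arm gadget.} The paper uses two core arms, with all others identically zero, so an $m$-set meets the gadget in $\emptyset$, a singleton, or the pair. The models are:
\begin{itemize}
\item $P$: uniform on the six points $(0,\frac{2}{3}),(\frac{1}{3},\frac{2}{3}),(\frac{1}{3},1),(\frac{2}{3},0),(\frac{2}{3},\frac{1}{3}),(1,\frac{1}{3})$;
\item $Q$: masses $\frac{1}{6},\frac{1}{3},\frac{1}{3},\frac{1}{6}$ on $(0,1),(\frac{1}{3},\frac{1}{3}),(\frac{2}{3},\frac{2}{3}),(1,0)$.
\end{itemize}
These have identical marginals. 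On the pair, both make $(r(1)+r(2),I)$ uniform over $\{\frac{2}{3},1,\frac{4}{3}\}\times\{1,2\}$. Yet the pair's expected winner reward is $\frac{37}{54}$ under $P$ and $\frac{2}{3}$ under $Q$, while singletons give $\frac{1}{2}$ in both.

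\textbf{Why the flip is not a swap.} Singleton utilities agree across models, so the only possible flip is between $\{1\}$ and $\{1,2\}$. It is not a swap of arms $1$ and $2$ in the optimal set, as in Theorem~\ref{th:linearlb}. Under the raw $P,Q$ the pair is optimal in both, so the paper re-centres by mixing both models with weight $\frac{19}{37}$ on the point mass at $(1,\frac{1}{3})$. Then $\{1\}$ is worth $\frac{168}{222}$ in both models, while $\{1,2\}$ is worth $\frac{169}{222}$ versus $\frac{167}{222}$. Adding arm $2$ helps under one model and cannibalizes arm $1$ under the other, which yields the average gap $\frac{1}{444}$.
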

\begin{proof}[Proof Sketch]
    We exploit a ``cannibalization effect'' specific to the winner's reward objective: adding a positive-reward arm can decrease the utility by diverting choice probability from an arm with larger reward. We focus on two \emph{core} arms $\{1,2\}$, completing the arm set with $K-2$ zero-reward dummy arms. We construct two reward distributions $P,Q$ under which the feedback $(r_t(S_t), I_t)$ has the same law for every action. Under $P$, adding arm $2$ improves the expected winner's reward, so an optimal action contains both core arms. Under $Q$, arm $2$ cannibalizes arm $1$. Every action has an average suboptimality gap of at least $1/444$ across these indistinguishable models.
\end{proof}

The outlook is only slightly better when the learner observes the winner's reward: the regret is sublinear in $T$, but exponential in $m$. This is true even if the winner's index is revealed.

\begin{restatable}{theorem}{winwin}\label{th:winwin}
    Consider a variant of \problem in which, for every
    $S\subseteq[K]$ with $|S|=m$, the objective function is the expected
    reward of the winner $\rho_t$, as defined in Theorem~\ref{th:profitlb}, but the learner observes the index and the reward of the winner, $Y_t=(I_t,r_t(I_t))$. Then, 
    a simple adaptation of INF~\citep{AudibertB10} achieves:
    \begin{equation*}
        R_T \le 8\sqrt{{K \choose m}T} = \mathcal{O}\left(\sqrt{\left(\frac{eK}{m}\right)^m T}\right).
    \end{equation*}
    Let $K>m\ge 2$ and $T\ge \frac{2^m{K\choose m}}{3}$. For any strategy of the learner, there exists an instance of this problem such that  
    \begin{align*}
    R_T &\ge \frac{1}{18}\sqrt{2^{-m}3{K\choose m}T} = \Omega\left(\sqrt{\left(\frac{K}{2m}\right)^mT}\right).
    \end{align*}
\end{restatable}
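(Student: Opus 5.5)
For the upper bound, I will treat each of the $N={K\choose m}$ subsets as a single arm of a standard $N$-armed adversarial bandit with rewards $\rho_t(S)\in[0,1]$. The observed winner reward $r_t(I_t)$ is an unbiased sample of the utility of the played set: conditionally on $S_t$,
\begin{equation*}
\EE{r_t(I_t)\mid S_t}=\sum_{i\in S_t}q_t(i\mid S_t)r_t(i)=\rho_t(S_t),
\end{equation*}
with the convention $\rho_t(S)=0$ when $r_t(S)=0$, since then $r_t(I_t)=0$. Moreover $r_t(I_t)\in[0,1]$. I will run INF (or Exp3) over the $N$ meta-arms with the importance-weighted estimator $\widehat\rho_t(S)=\indi{S=S_t}r_t(I_t)/p_t(S)$. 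The standard INF analysis of \citet{AudibertB10} only needs an unbiased estimator of a $[0,1]$-valued reward from a bounded bandit observation, and here the observation is random given $S_t$. Taking the expectation over $I_t$ first, the variance term is bounded by $\sum_S p_t(S)\,\EE{\widehat\rho_t(S)^2}\le N$. This gives $R_T\le 8\sqrt{NT}$ once the rewards are mapped to losses $1-r$ and the constants are tracked. Finally, ${K\choose m}\le (eK/m)^m$ gives the $\mathcal O$ form.

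For the lower bound, the goal is to make the problem equivalent to an $N'$-armed bandit with $N'\approx 2^{-m}{K\choose m}$ effective arms, each carrying only Bernoulli-type information. The cannibalization effect from Theorem~\ref{th:profitlb} is the tool for hiding information. I will choose a hidden ``good'' set $S^\star$ from a large family $\mathcal S$ of $m$-sets and design rewards so that:
\begin{enumerate}
\item $\rho_t(S)$ equals a baseline for every $S\ne S^\star$, and is larger by $\varepsilon$ for $S=S^\star$;
\item the law of $(I_t,r_t(I_t))$ under $S$ differs from the baseline model only when $S=S^\star$.
\end{enumerate}
A natural attempt uses i.i.d.\ rewards in $\{0,1\}$. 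When rewards are binary, the winner is uniform over the arms of $S$ with reward $1$, so $\rho_t(S)=\Pr(\exists i\in S: r_t(i)=1)$ and the feedback is the identity of a uniformly chosen $1$-arm, or the event that there is none. I would then take a joint (correlated) distribution of $r_t\in\{0,1\}^K$ whose $(m-1)$-wise marginals are fixed and uniform, but whose $m$-wise structure on $S^\star$ is tilted. The perturbation can be built by multiplying with a parity-type term $1+\varepsilon\prod_{i\in S^\star}(2b_i-1)$. It must be checked that this tilt changes the law of the feedback, including $\Pr(\text{all zeros on }S)$, only for $S=S^\star$. The factor $2^{-m}$ enters here: for the perturbation to be visible in $\rho$ and in the feedback, it must act on configurations of probability about $2^{-m}$, such as the all-zero configuration on $S^\star$. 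The KL divergence per play of $S^\star$ is therefore about $\varepsilon^2 2^{m}$ relative to a gap of $\varepsilon$. Equivalently, the number of effectively distinguishable sets is about ${K\choose m}2^{-m}$.

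With this construction, the rest is the standard Auer et al.\ argument: average over $S^\star$, compare each instance with the baseline model, bound the KL by $\EE{N_{S^\star}}\cdot c\,\varepsilon^2 2^m$, and apply Pinsker. Choosing $\varepsilon\asymp\sqrt{3{K\choose m}2^{-m}/T}$ then gives $R_T\ge \frac1{18}\sqrt{2^{-m}3{K\choose m}T}$. The condition $T\ge 2^m{K\choose m}/3$ ensures $\varepsilon\le 1$, or whatever small constant the construction needs.

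\textbf{Main obstacle.} The hard step is the reward construction in the lower bound. It must satisfy all of the following at once:
\begin{itemize}
\item rewards stay in $[0,1]$;
\item the tilted distribution is valid, which requires $\varepsilon\le$ a constant;
\item every set other than $S^\star$, including sets overlapping $S^\star$ in $m-1$ arms, sees exactly the baseline feedback law;
\item the KL for $S^\star$ is $O(2^m\varepsilon^2)$ rather than $O(\varepsilon^2)$.
\end{itemize}
If the parity tilt fails the third condition, my fallback is to restrict to a sub-family of pairwise ``far'' sets, or to use non-binary rewards with a shared random scale, as in Theorem~\ref{th:lower2}, to wash out lower-order marginals.
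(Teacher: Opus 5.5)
Your route matches the paper's. For the upper bound, the $N=\binom{K}{m}$ sets are arms of an $N$-armed bandit fed with $r_t(I_t)$. The paper pre-samples the winners $J_t(S)$ so that $X_t(S)=r_t(J_t(S))$ is an oblivious stochastic sequence with means $\rho_t(S)$, then applies INF's bound as a black box. This is equivalent to your conditionally unbiased estimator, but it must be INF: Exp3 would cost an extra $\sqrt{\log N}$. For the lower bound, the paper likewise compares i.i.d.\ $\ber(1/2)$ rewards with a parity tilt of the joint law on a hidden $m$-set, averaged over all $N$ sets. However, what you defer as the ``main obstacle'' is the entire content of the lower bound, so the proof is incomplete as written.

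Two fixes first. The tilt must be $2^{-m}\bigl(1-\varepsilon(-1)^{\sum_i x_i}\bigr)$. Your $1+\varepsilon\prod_i(2b_i-1)=1+(-1)^m\varepsilon(-1)^{\sum_i b_i}$ puts mass $2^{-m}(1+(-1)^m\varepsilon)$ on the all-zero configuration, which for even $m$ makes $S^\star$ worse than baseline. With the right sign the gap is $g=2^{-m}\varepsilon$. The requirement $\varepsilon\le\frac12$ then reads $g\le 2^{-m-1}$, not ``$g$ below a constant'', and this is exactly what $T\ge 2^mN/3$ guarantees for $\varepsilon=\frac16\sqrt{3N2^m/T}$.

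Your third condition then holds for every $S'\neq S^\star$, so no fallback is needed; a far sub-family would only shrink $N$. Summing the tilted law over any free coordinate cancels the parity term, so every proper sub-vector of $v_t(S^\star)$ is uniform. Since coordinates outside $S^\star$ are independent fair bits, $v_t(S')$ is uniform on $\{0,1\}^m$ even when $|S'\cap S^\star|=m-1$. Hence $S'$ has the baseline feedback law and expected utility $1-2^{-m}$.

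The genuinely missing idea is why the per-play KL on $S^\star$ is $O(2^{-m}\varepsilon^2)=O(2^m g^2)$. Your heuristic does not give it. The tilt moves every one of the $2^m$ configurations by $2^{-m}\varepsilon$, and it also changes the law of $r_t(S^\star)$. The two laws of the full vector $v_t(S^\star)$ are at KL $-\frac12\log(1-\varepsilon^2)=\Theta(4^m g^2)$, so data processing from $v_t(S^\star)$ would only give $\Omega(\sqrt{4^{-m}NT})$. You must show that the feedback reveals nothing beyond the event $\{v_t(S^\star)=\mathbf{0}\}$.

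This follows from exchangeability. The tilted law depends on $x$ only through its parity, so under either model $\Pr(I_t=i,\,r_t(I_t)=1\mid S_t=S^\star)=\mathbb{E}\bigl[\frac{r_t(i)}{r_t(S^\star)}\indi{r_t(S^\star)>0}\bigr]$ does not depend on $i$. Together with the uniform tie-break when all rewards vanish, $I_t$ is uniform given $r_t(I_t)$. The two feedback laws are therefore $\unif(S^\star)\otimes\ber(1-2^{-m})$ and $\unif(S^\star)\otimes\ber(1-2^{-m}(1-\varepsilon))$. The $\chi^2$ bound gives KL $\le\frac83 2^{-m}\varepsilon^2$ for $\varepsilon\le\frac12$ and $m\ge 2$.

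Your averaging/Pinsker/Jensen step then gives $\frac1N\sum_S\mathbb{E}_{P^S}[R_T]\ge 2^{-m}\varepsilon T\bigl(1-\frac1N-2\varepsilon\sqrt{2^{-m}T/(3N)}\bigr)$. For the above $\varepsilon$ the bracket is at least $\frac13$, since $N\ge 3$, yielding $\frac1{18}\sqrt{2^{-m}3NT}$. Finally, the appeal to cannibalization is a red herring: with binary rewards $\rho_t(S)=\indi{r_t(S)>0}$, and the gap comes entirely from the mass of the all-zero configuration.
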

\begin{proof}[Proof Sketch]
    For the upper bound, we follow~\citet{han2021adversarial} and treat the ${K \choose m}$ actions as the arms of a MAB, applying INF to the observed winner rewards, ignoring the winner indexes. For the lower bound, we reproduce their averaging argument, replacing their non-constructive hard instance with an explicit parity perturbation of independent Bernoulli rewards. This perturbation preserves all proper marginals while increasing the expected winner's reward of a distinguished set. Moreover, by exchangeability, conditional on the reward, the winner's index is uniform under both models, providing no extra information. Bounding the resulting KL divergence and optimizing the perturbation size gives the stated lower bound.
\end{proof}

The latter is coherent with the results by~\citet{han2021adversarial} on adversarial combinatorial bandits with general link functions. However, their results are not directly comparable with ours since they consider uncoupled attraction and (known) utility parameters, and a no-purchase option. Note there is still a $2^{m/2}$ gap in the dependence on $m$.

\begin{figure*}[t]
    \centering
    \begin{subfigure}[t]{0.48\textwidth}
        \centering
        \includegraphics[width=\linewidth]{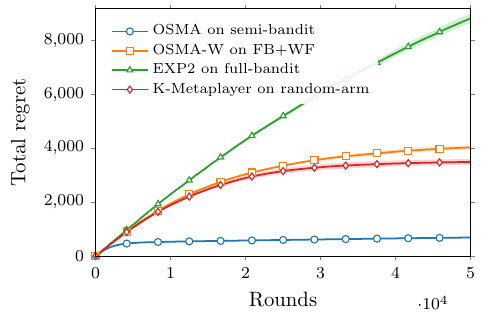}
        \caption{}
        \label{fig:correlated}
    \end{subfigure}
    \hfill
    \begin{subfigure}[t]{0.48\textwidth}
        \centering
        \includegraphics[width=\linewidth]{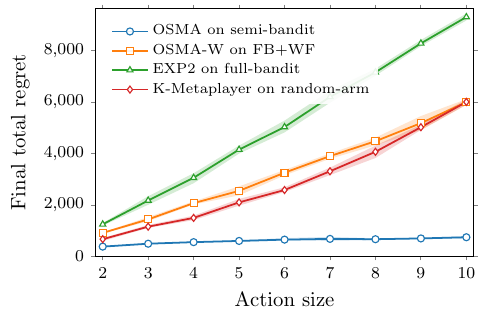}
        \caption{}
        \label{fig:action_size}
    \end{subfigure}
    \caption{Results of no-regret algorithms with their respective feedback on the stationary correlated reward instance. On the left (a), total regret per round, averaged over 10 seeds; on the right (b), final total regret after $20k$ rounds for increasing action size $m$, averaged over 5 seeds; 95\% bootstrap confidence intervals are shown.}
    \label{fig:combined}
\end{figure*}

\section{EXPERIMENTS}\label{sec:exp}
In this section, we test the algorithms discussed in the paper on synthetic $m$-set bandit instances.\footnote{Code at \url{https://github.com/T3p/konbini}.} We only consider feedback models for which regret sublinear in $T$ and polynomial in $K,m$ is possible.  Each algorithm is given access to the feedback for which it is designed. Namely, we evaluate OSMA on semi-bandit, \algname on \problem, $K$-Metaplayer on random-arm, and EXP2 with John's  exploration on full-bandit feedback. In all of these settings the objective function is the sum of rewards of the played arms, so we can directly compare their regret curves. All algorithm are run with the hyperparameters prescribed by the theory (see Appendix~\ref{app:exp} for details).

We first consider a stationary-reward setting with $K=10$, subsets of size $m=4$, and horizon $T=50000$. A set $G$ of $m$ good arms is randomly chosen and held fixed for the whole learning process. 
Reward generation is inspired by the hard instances from the proof of Theorem~\ref{th:lower2}.
For every round $t$, every base arm $i\in[K]$ is given reward $r_t(i)=\frac{1}{4}\beta_t(i) + U_{t}$, where $\beta_t(i)\sim\ber(1/4+ 4\varepsilon\indi{i\in G})$ and $U_{t}\sim \unif([1/4, 1/2])$ makes the base-arm rewards correlated within the same round. We set $\varepsilon=0.1$ in our experiments. Figure~\ref{fig:correlated} shows the total regret per round of the four algorithms. They play against the same reward sequence, generated once and for all. Results are averaged over $10$ random seeds governing the stochasticity of the algorithms and of the feedback model (winner, random-arm). In this and all other plots in this section, shaded areas correspond to 95\% pointwise bootstrap confidence intervals computed with 2000 bootstrap resamples. 
All regret curves are sublinear and their ordering is the one predicted by the theory even though $K$-Metaplayer behaves slightly better than \algname in the respective settings. In appendix~\ref{app:exp} we repeat the same experiment averaged over 100 independent reward-sequence instances to qualitatively analogous results.

Figure~\ref{fig:action_size} shows the final total regret after $20000$ rounds for increasing values of the action size $m$ on the same type of correlated stationary rewards. For all experiments, $K=20$, the number of good arms is $10$, and the reward sequence is held fixed. The range of values of $m$ is too small (due to computational constraints, the main bottleneck being EXP2) to claim anything about the order of the polynomial dependence of the regret on $m$. However, not only the ordering of the different feedback models is coherent with the theory, but, as predicted, the gaps get wider as $m$ increases.

Figure~\ref{fig:corrupted} shows the total regret per round on a variant of the stationary-reward setting where 20\% of rounds are corrupted. A set $D$ of $m$ decoy arms is sampled at the beginning of the process. For each $t$, with probability $0.2$, the rewards of arms in $D$ are set to $1$. The results are similar to the stochastic case.

\begin{figure}[t]
    \centering
    \includegraphics[width=\linewidth]{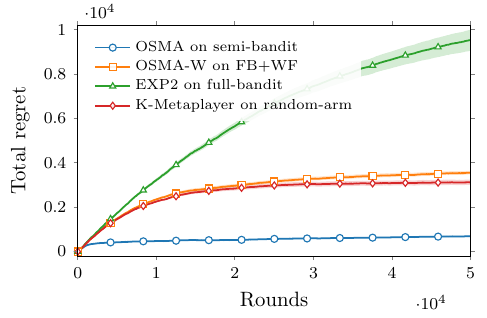}
    \caption{Results on corrupted rewards. Total regret per round, averaged over 10 seeds, with 95\% bootstrap confidence intervals.}
    \label{fig:corrupted}
\end{figure}

In Appendix~\ref{app:exp} we provide some further results on different kinds of reward instances: uncorrelated uniform and Bernoulli rewards, and time-varying (block-wise stationary) rewards. The results are again similar.

\section{CONCLUSION}
\algname admits nontrivial potential improvements. First, running \osmd with regularizers different from the negative entropy may allow removing the logarithmic factors from the regret, although the INF regularizer proposed by~\citet{audibert2014regret} does not apply directly, as already discussed. Second, our algorithm currently only works with the complete ${[K]\choose m}$ action space. The main challenge to handle constrained action spaces $\mathcal{A}\subset {[K]\choose m}$ would be to adapt the forced exploration mechanism. Finally, an algorithm that is computationally efficient also with constrained action spaces could be developed using ideas from \citep{NeuB16}, although it is not clear how to make that work with the reward proxies.

Related models worth studying are: \emph{loser feedback}, where choice probability is proportional to losses instead of rewards (we suspect this is an easier setting); non-linear attractions, as in the exponential Plackett-Luce model; the impact of a no-purchase option; the effects of decoupling attractions from revenues.




\FloatBarrier
\clearpage

\subsubsection*{Acknowledgements}
We thank Lukas Zierahn for contributing to a preliminary version of this work. NCB and MP acknowledge the financial support from the EU Horizon CL4-2022-HUMAN-02 research and innovation action under grant agreement 101120237, project ELIAS (European Lighthouse of AI for Sustainability).

\bibliographystyle{apalike}
\bibliography{references}

\clearpage
\appendix
\thispagestyle{empty}

\onecolumn

\section{Proof of Theorem \ref{th:upper}}\label{app:upper}

Our proof of Theorem \ref{th:upper} is an adaptation of Theorem 3 from \cite{audibert2014regret}, which upper bounds the regret of \osmd with negative entropy for adversarial combinatorial bandits with semi-bandit feedback. There are two differences in our analysis: we maximize instead of minimizing, since we work entirely with rewards; and the rewards of the base arms, even those belonging to the played subset $S_t$, are replaced by unbiased estimates (Eq. \ref{eq:estim}). 

First, we recall some fundamental convex optimization concepts. Given an open convex set $\mathcal{D}\subseteq \Reals^K$, a continuous function $F:\overline{D}\to\Reals$ is \emph{Legendre} if it is strictly convex and continuously differentiable on $\mathcal{D}$, and $\lim_{x\to\partial\mathcal{D}}\|\nabla F(x)\|=+\infty$. The Bregman divergence $D_F:\overline{\mathcal{D}}\times\mathcal{D}\to\Reals$ generated by $F$ is defined as
\begin{equation}
    D_F(x,y)=F(x)-F(y)-\langle x-y, \nabla F(y)\rangle,
\end{equation}
and its Legendre-Fenchel transform $F^*:\mathcal{D}^*\to\Reals$ is defined as
\begin{equation}
    F^*(u) = \sup_{x\in\overline{\mathcal{D}}}(\langle x, u\rangle - F(x)),
\end{equation}
where $D^*=\{\nabla F(x)\mid x\in \mathcal{D}\}$ is the dual space. If $F$ is Legendre, $F^*$ is Legendre, $F^{**}=F$, $\nabla F^*=(\nabla F)^{-1}$ is a one-to-one mapping, and $D_{F}(x,y)=D_{F^*}(\nabla F(y), \nabla F(x))$.

Next, we restate a fundamental property of \osmd (\cite{audibert2014regret}, Theorem 2), adapted to our reward-maximization framework with forced exploration. The proof is rather standard, but we include it for completeness.

\begin{lemma}\label{lem:osma}
    Consider a generalization of Algorithm \ref{alg:osma} where $F$ is any Legendre function with domain $\mathcal{D}$ such that $\Conv(\mathcal{A})\subseteq \overline{\mathcal{D}}$, and $\widehat{r}_t$ is any unbiased reward estimator such that, for all $x\in \Conv(\mathcal{A})\cap \mathcal{D}$, $\nabla F(x)+\eta\widehat{r}_t \in \mathcal{D}^*$ for all $t\in[T]$. Then its expected regret is upper bounded as
    \begin{equation*}
        R_T \le \frac{\sup_{a\in\mathcal{A}}F(a)-F(x_1)}{\eta} + \frac{1}{\eta}\sum_{t=1}^T\EE{D_{F^*}\left(\nabla F(x_t)+\eta \widehat{r}_t, \nabla F(x_t)\right)} + \epsilon mT.
    \end{equation*}
\end{lemma}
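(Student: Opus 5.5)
The plan is the standard mirror-descent argument of \citet[Theorem~2]{audibert2014regret}, written for rewards instead of losses, plus an accounting term for forced exploration. Fix a comparator $a\in\mathcal{A}$. The first step splits the regret into the cost of mixing with the uniform distribution and the regret of the unmixed iterates $x_t$. Since $p_t=(1-\epsilon)\widetilde p_t+\epsilon/|\mathcal{A}|$ and $\EEp{a\sim\widetilde p_t}{a}=x_t$, we have
\begin{equation*}
    \EEt{\langle a_t,r_t\rangle}\ge(1-\epsilon)\langle x_t,r_t\rangle\ge \langle x_t,r_t\rangle-\epsilon m,
\end{equation*}
because $\langle x_t,r_t\rangle\le m$. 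Summing over $t$ gives $\sum_t\langle a,r_t\rangle-\EE{\sum_t r_t(S_t)}\le \EE{\sum_t\langle a-x_t,r_t\rangle}+\epsilon mT$. By unbiasedness, and because $x_t$ is $\mathcal{F}_{t-1}$-measurable, we can replace $r_t$ with $\widehat r_t$ inside the expectation. It then remains to bound $\sum_t\langle a-x_t,\widehat r_t\rangle$ pathwise.

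The second step is the three-point identity for Bregman divergences. The mirror step $\nabla F(\widetilde x_{t+1})=\nabla F(x_t)+\eta\widehat r_t$ is well defined by the assumption $\nabla F(x_t)+\eta\widehat r_t\in\mathcal{D}^*$. This gives
\begin{equation*}
    \eta\langle a-x_t,\widehat r_t\rangle=\langle a-x_t,\nabla F(\widetilde x_{t+1})-\nabla F(x_t)\rangle
    = D_F(a,x_t)-D_F(a,\widetilde x_{t+1})+D_F(x_t,\widetilde x_{t+1}).
\end{equation*}
Next we apply the generalized Pythagorean inequality for the Bregman projection $x_{t+1}$ of $\widetilde x_{t+1}$ onto $\Conv(\mathcal{A})$, which yields $D_F(a,\widetilde x_{t+1})\ge D_F(a,x_{t+1})+D_F(x_{t+1},\widetilde x_{t+1})\ge D_F(a,x_{t+1})$. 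Summing over $t$, the divergences to the comparator telescope to at most $D_F(a,x_1)$.

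The third step has two parts. First, we use the first-order optimality of $x_1$, the minimizer of $F$ over $\Conv(\mathcal{A})$: it gives $\langle\nabla F(x_1),a-x_1\rangle\ge0$, hence $D_F(a,x_1)\le F(a)-F(x_1)\le\sup_{a}F(a)-F(x_1)$. Second, we use the duality identity $D_F(x_t,\widetilde x_{t+1})=D_{F^*}(\nabla F(\widetilde x_{t+1}),\nabla F(x_t))=D_{F^*}(\nabla F(x_t)+\eta\widehat r_t,\nabla F(x_t))$. Dividing by $\eta$, taking expectations and maximizing over $a$ gives exactly the claimed bound.

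The main obstacle is bookkeeping rather than depth. We must check that the domain conditions make every quantity well defined: $x_t\in\Conv(\mathcal{A})\cap\mathcal{D}$, which holds because projections of points of $\mathcal{D}$ stay in the interior for a Legendre $F$, and the mirror step stays in $\mathcal{D}^*$. We must also handle the change of sign from losses to rewards correctly in the three-point identity. Finally, the exploration term has to be charged against $\langle x_t,r_t\rangle$ and not against the comparator.
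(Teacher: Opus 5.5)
Your proposal is correct and matches the paper's proof step for step: the same three-point identity, generalized Pythagorean inequality, telescoping, first-order optimality of $x_1$, and duality identity $D_F(x_t,\widetilde x_{t+1})=D_{F^*}(\nabla F(x_t)+\eta\widehat r_t,\nabla F(x_t))$. The only difference is cosmetic: you bound the exploration cost directly from the conditional mean $(1-\epsilon)x_t+\epsilon\frac{m}{K}\mathbf{1}$ of $a_t$, whereas the paper routes it through an auxiliary unmixed draw $\widetilde a_t\sim\widetilde p_t$, which amounts to the same computation.
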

\begin{proof}
    Consider a version of the algorithm without forced exploration ($\epsilon=0$). Namely, let $\widetilde{a}_t \sim \widetilde{p}_t$ and $\widetilde{S}_t = S_{\widetilde{a}_t}$. Note that $\EEt{\widetilde{a}_t}=x_t$ by Line \ref{ll:pt_tilde} of Algorithm \ref{alg:osma}.
    The regret can be decomposed as
    \begin{align}
        R_T = \underbrace{\max_{S\subseteq[K], |S|=m}\sum_{t=1}^T r_t(S) - \EE{\sum_{t=1}^T r_t(\widetilde{S}_t)}}_{(A)} + \underbrace{\EE{\sum_{t=1}^T \left(r_t(\widetilde{S}_t)-r_t(S_t)\right)}}_{(B)}. \label{eq:decomp}
    \end{align}

    For the first term we follow the proof of Theorem 2 from \cite{audibert2014regret}. Fix any $S\subseteq [K]$ such that $|S|=m$, and let $a\in\mathcal{A}$ such that $S=S_a$. Then,
    \begin{align}
        \sum_{t=1}^T r_t(S) - \EE{\sum_{t=1}^T r_t(\widetilde{S}_t)}
        &= \EE{\sum_{t=1}^T\langle a - \widetilde{a}_t, r_t \rangle} \\
        &= \EE{\sum_{t=1}^T\langle a - \EEt{\widetilde{a}_t}, r_t \rangle} \\ 
        &= \EE{\sum_{t=1}^T\langle a - x_t, r_t \rangle} \\
        &= \EE{\sum_{t=1}^T\langle a - x_t, \EEt{\widehat{r}_t} \rangle} \label{eq:rt} \\
        &= \EE{\sum_{t=1}^T\langle a - x_t, \widehat{r}_t \rangle} \label{eq:measure} \\
        &= \frac{1}{\eta}\EE{\sum_{t=1}^T\langle a - x_t, \nabla F(\widetilde{x}_{t+1}) - \nabla F(x_t) \rangle}, \label{eq:A1}
    \end{align}
    where Equation \eqref{eq:rt} is by Lemma \ref{lem:estim}, Equation \eqref{eq:measure} relies on the fact that $x_t$ is $\mathcal{F}_{t-1}$-mesurable, and the last equality is by Line \ref{ll:mirror} of Algorithm \ref{alg:osma}.
    By the three-point identity of the Bregman divergence \citep{ChenT93}:
    \begin{equation}
        D_F(a, \widetilde{x}_{t+1}) = D_F(a, x_t) + D_F(x_t,\widetilde{x}_{t+1}) - \langle a - x_t, \nabla F(\widetilde{x}_{t+1}) - \nabla F(x_t)\rangle.
    \end{equation}
    Also, by the generalized Pythagorean inequality \citep{bauschke1997legendre} and the fact that $x_{t+1}$ is the Bregman projection of $\widetilde{x}_{t+1}$ onto a convex set containing $a$:
    \begin{equation}
        D_F(a,\widetilde{x}_{t+1}) \ge D_F(a,x_{t+1}) + D_F(x_{t+1}, \widetilde{x}_{t+1}).
    \end{equation}
    Hence,
    \begin{align}
        \langle a - x_t, \nabla F(\widetilde{x}_{t+1}) - \nabla F(x_t)\rangle
        &= D_F(a, x_t) - D_F(a, \widetilde{x}_{t+1}) + D_F(x_t, \widetilde{x}_{t+1}) \\
        &\le D_F(a, x_t) - D_F(a, x_{t+1}) - D_F(x_{t+1},  \widetilde{x}_{t+1}) + D_F(x_t, \widetilde{x}_{t+1}).
    \end{align}
    Summing over $t$, the first difference telescopes:
    \begin{align}
        \sum_{t=1}^T\langle a - x_t, \nabla F(\widetilde{x}_{t+1}) - \nabla F(x_t)\rangle 
        &\le D_F(a,x_1) - D_F(a, x_{T+1}) + \sum_{t=1}^T\left(D_F(x_t, \widetilde{x}_{t+1})- D_F(x_{t+1},  \widetilde{x}_{t+1})\right) \\
        &\le D_F(a,x_1)+ \sum_{t=1}^T D_F(x_t, \widetilde{x}_{t+1}) \\
        &=D_F(a,x_1) + \sum_{t=1}^TD_{F^*}(\nabla F(\widetilde{x}_{t+1}), \nabla F(x_t)) \\
        &=D_F(a,x_1) + \sum_{t=1}^TD_{F^*}(\nabla F(x_t)+\eta\widehat{r}_t, \nabla F(x_t)) \\
        &\le F(a) - F(x_1) + \sum_{t=1}^TD_{F^*}(\nabla F(x_t)+\eta\widehat{r}_t, \nabla F(x_t)).\label{eq:A2}
    \end{align}
    The last inequality is from Line \ref{ll:init} of Algorithm \ref{alg:osma} and first-order optimality conditions for $x_1$:
    \begin{align}
        \langle a-x_1, \nabla F(x_1)\rangle \ge 0 \implies D_F(a,x_1) \le F(a) - F(x_1).
    \end{align}
    Plugging Equation \eqref{eq:A2} into Equation \eqref{eq:A1}, since the latter holds for any feasible $S$, we obtain the following upper bound on the first term of Equation \ref{eq:decomp}:
    \begin{equation}
        (A) \le  \frac{\sup_{a\in\mathcal{A}}F(a)-F(x_1)}{\eta} + \frac{1}{\eta}\sum_{t=1}^T\EE{D_{F^*}\left(\nabla F(x_t)+\eta \widehat{r}_t, \nabla F(x_t)\right)}.
    \end{equation}
    
    It remains to bound the second term of Equation \eqref{eq:decomp}. Let
      \begin{align}
        z_t &\coloneqq \EEp{a\sim p_t}{a} \\
        &= (1-\epsilon)\EEp{a\sim \widetilde{p}_t}{a} + \epsilon\frac{1}{|\mathcal{A}|}\sum_{a\in\mathcal{A}}a\\
        &= (1-\epsilon)x_t + \epsilon\frac{1}{{K \choose m}}{K-1 \choose m-1}\mathbf{1} \\
        &= (1-\epsilon)x_t +\epsilon \frac{m}{K}\mathbf{1},
    \end{align}
    and note that $\EEt{a_t}=z_t$ by Line \ref{ll:sample} of Algorithm \ref{alg:osma}.
    Then,
    \begin{align}
        (B) &= \EE{\sum_{t=1}^T \left\langle \widetilde{a}_t - a_t, r_t\right\rangle} \\
        &= \EE{\sum_{t=1}^T \left\langle \EEt{\widetilde{a}_t - a_t}, r_t\right\rangle} \\
        &= \EE{\sum_{t=1}^T \left\langle x_t-z_t, r_t\right\rangle} \\
        &= \epsilon\EE{\sum_{t=1}^T \left\langle x_t-\frac{m}{K}\mathbf{1}, r_t\right\rangle} \\
        &\le \epsilon \sum_{t=1}^T\langle x_t, r_t\rangle \\
        &\le \epsilon mT
    \end{align}
    concluding the proof.
\end{proof}

We can now prove Theorem \ref{th:upper} by adapting Theorem 3 from \cite{audibert2014regret} to our setting, using Lemma \ref{lem:osma} and the properties of our reward estimator established in Lemma \ref{lem:estim}.

\upper*
\begin{proof}
    It is well known, and easily verified, that the negative entropy is Legendre with domain $\mathcal{D}=(0,+\infty)^K$, so $\Conv(\mathcal{A})=\{a\in[0,1]^K\mid \|a\|_1= m\}$ is contained in the domain's closure $\overline{\mathcal{D}}$. Since the gradient is $\nabla F(x)=\log x$, the dual space is $\mathcal{D}^*=\Reals^K$. This means that the condition on the mirror ascent update from Lemma \ref{lem:osma} is automatically verified. Morover, the reward estimator $\widehat{r}_t$ from Equation \eqref{eq:estim} is unbiased by Lemma \ref{lem:estim}, hence the upper bound from Lemma \ref{lem:osma} holds.

    First, consider the first term from Lemma \ref{lem:osma}. For the negative entropy, Line \ref{ll:init} of Algorithm \ref{alg:osma} yields $x_1 = \frac{m}{K}\mathbf{1}$ and $F(x_1)=m\log\frac{m}{K}-m$, and $F(a)=-m$ for all $a\in\mathcal{A}$. Hence:
    \begin{equation}
        \frac{\sup_{a\in\mathcal{A}}F(a)-F(x_1)}{\eta} = \frac{m}{\eta}\log\frac{K}{m}.\label{eq:first}
    \end{equation}
    
    Let us now consider the second term from Lemma \ref{lem:osma}.
    Since $F$ is Legendre, $\nabla F$ is invertible, so there exists $y_t\in\Reals^d$ such that $\nabla F(y_t)=\nabla F(x_t)+\eta \widehat{r}_t$, that is, $\log y_t = \log x_t + \eta \widehat{r}_t$. Moreover,
    \begin{align}
        D_{F^*}(\nabla F(y_t),\nabla F(x_t)) = D_F(x_t, y_t) &= \sum_{i=1}^K \left(x_t(i)(\log x_t(i) - \log y_t(i))-x_t(i)+y_t(i)\right) \\
        &= \sum_{i=1}^K \left(x_t(i)(\log x_t(i) - \log y_t(i))-x_t(i)+y_t(i)\right) \\
        &= \sum_{i=1}^K x_t(i)\left(-\eta\widehat{r}_t(i)-1+e^{\eta\widehat{r}_t(i)}\right).\label{eq:exponential}
    \end{align}
    Suppose $\eta K \le \epsilon$. Then, by Lemma \ref{lem:estim}, $\eta\widehat{r}_t(i) \le \frac{\eta K}{\epsilon} \le 1$. Using the fact that $e^x-x-1 \le (e-2)x^2$ for all $x\le 1$:
    \begin{equation}
        -\eta\widehat{r}_t(i)-1+e^{\eta\widehat{r}_t(i)} \le (e-2)\eta^2\widehat{r}_t(i)^2.\label{eq:trick}
    \end{equation}
    Plugging this into Equation \eqref{eq:exponential}:
    \begin{align}
        \frac{1}{\eta}\sum_{t=1}^T\EE{D_{F^*}\left(\nabla F(x_t)+\eta \widehat{r}_t, \nabla F(x_t)\right)} 
        &\le (e-2)\eta \sum_{t=1}^T\EE{\sum_{i=1}^K x_t(i)\widehat{r}_t(i)^2} \\
        &\le (e-2)\eta \EE{\sum_{t=1}^T\sum_{i=1}^K x_t(i)\EEt{\widehat{r}_t(i)^2}}.\label{eq:quadratic}
    \end{align}
    Recall, from the proof of Lemma \ref{lem:estim}, that:
    \begin{align}
        \pi_t(i) &= \sum_{a\in\mathcal{A}:a(i)=1}p_t(a) = (1-\epsilon)\sum_{a\in\mathcal{A}:a(i)=1}\widetilde{p}_t(a) + \frac{\epsilon m}{K} \\
        &\ge (1-\epsilon)\EEp{a\sim\widetilde{p}_t}{a(i)} \\
        &=(1-\epsilon)x_t(i),
    \end{align}
    so $x_t(i)\le\frac{\pi_t(i)}{1-\epsilon}$. Using this in Equation \eqref{eq:quadratic}:
    \begin{align}
         \frac{1}{\eta}\sum_{t=1}^T\EE{D_{F^*}\left(\nabla F(x_t)+\eta \widehat{r}_t, \nabla F(x_t)\right)} 
         &\le \frac{(e-2)}{1-\epsilon}\eta \EE{\sum_{t=1}^T\sum_{i=1}^K \pi_t(i)\EEt{\widehat{r}_t(i)^2}} \\
         &\le \frac{(e-2)\eta mKT}{1-\epsilon}.
    \end{align}
    
    By plugging the latter inequality and Equation \ref{eq:first} into the upper bound of Lemma \ref{lem:osma}, we obtain:
    \begin{equation}
        R_T \le \frac{m}{\eta}\log\frac{K}{m} + \frac{(e-2)\eta mKT}{1-\epsilon} + \epsilon m T.
    \end{equation}

    Let $\epsilon=\eta K$, the smallest value that satisfies the condition required by Equation \eqref{eq:trick}. For this choice of $\epsilon$, forcing $\eta \le \frac{3-e}{K}$ (which also ensures $\epsilon\le 3-e < 1$):
        \begin{equation}\label{eq:find_eta}
        R_T \le \frac{m}{\eta}\log\frac{K}{m} + 2\eta mKT.
    \end{equation}
    Solving for $\eta$, we obtain the optimal value $\eta^\star=\sqrt{\log(K/m)/(2KT)}$. When $\eta^\star \le (3-e)/K$, the proof is concluded by substituting the value of $\eta^\star$ in Equation \eqref{eq:find_eta}. If instead, $\eta > (3-e)/K$, then it must be:
    \begin{equation*}
        T < \frac{K\log\frac{K}{m}}{2(3-e)^2} < 7K\log\frac{K}{m}.
    \end{equation*}
    Since the regret at each round is at most $m$, $R_T\le mT$. However, this is smaller than the claimed bound for any $T\le 8K\log\frac{K}{m}$.
\end{proof}

\section{Proof of Theorem \ref{th:lower2}} \label{app:lower}

We first establish a useful information-theoretic fact. We are going to compare two probability models (denoted by capital letters $P$ and $Q$). For random elements, such as $Y$, whose law may differ under the two models, we are going to denote their law as $P_Y$ and $Q_Y$, respectively. We use $\EEp{P}{\cdot}$ and $\EEp{Q}{\cdot}$ to denote expectations under the two models.

\begin{lemma}\label{lem:kl}
    Let $0<a<b<1$, $p\in(0,1)$, and $\delta>0$ such that $p+\delta < 1$. Let $m\ge 2$ and, for each $i\in[m]$, let $q_i\in(0,1)$. 
    Consider two probability models $P$ and $Q$ and define random variables $U,B_1,\dots,B_m$ as follows:
    \begin{itemize}
        \item $B_1 \sim \ber(p+\delta)$ under $P$ and $B_1\sim \ber(p)$ under $Q$.
        \item For $i=2,\dots,m$, $B_i\sim \ber(q_i)$ under both $P$ and $Q$.
        \item $U\sim\unif([a,b])$ under both $P$ and $Q$;
        \item Under each model, $U,B_1,\dots,B_m$ are mutually independent.
    \end{itemize}

Let $0<\eta<m(b-a)$. For $i\in[m]$, let $X_i=U+\eta B_i$, $X=(X_1,\dots,X_m)$, and
\begin{equation*}
    Z = \sum_{i=1}^m X_i.
\end{equation*}
Let $I$ be a random index supported on $[m]$ and distributed as follows conditionally on $X$:
\begin{equation*}
    \Pr(I=i\mid X) = \frac{X_i}{Z}.
\end{equation*}
Finally, let $Y=(Z,I)$. Then, $X_i\in[a,b+\eta]$ a.s., and
    \begin{equation*}
        \kl(Q_Y\|P_Y) \le \frac{2\eta\delta^2}{m\min\{p+\delta,1-p-\delta\}}\left(\frac{2}{b-a}+\frac{1}{a}\right).
    \end{equation*}
\end{lemma}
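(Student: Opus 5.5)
The plan is to bound the KL divergence by the $\chi^2$-divergence, and then reduce everything to an $L^1$ distance between the two laws of $Y$ obtained by fixing the value of $B_1$. The claim $X_i\in[a,b+\eta]$ a.s.\ is immediate from $U\in[a,b]$ and $B_i\in\{0,1\}$.

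\textbf{Step 1: mixture form.} For $\beta\in\{0,1\}$, let $f_\beta$ denote the density of $Y=(Z,I)$ with respect to Lebesgue$\times$counting measure when $B_1=\beta$. Since only the law of $B_1$ differs between the models,
\[
P_Y=(p+\delta)f_1+(1-p-\delta)f_0,\qquad Q_Y=P_Y-\delta(f_1-f_0).
\]
Set $c=\min\{p+\delta,1-p-\delta\}$. Then $P_Y\ge c(f_1+f_0)$, and using $\kl\le\chi^2$,
\[
\kl(Q_Y\|P_Y)\le\chi^2(Q_Y\|P_Y)=\delta^2\int\frac{(f_1-f_0)^2}{P_Y}\le\frac{\delta^2}{c}\int\frac{(f_1-f_0)^2}{f_1+f_0}\le\frac{\delta^2}{c}\,\|f_1-f_0\|_1 .
\]
It therefore suffices to show
\[
\|f_1-f_0\|_1\le\frac{2\eta}{m}\Bigl(\frac{2}{b-a}+\frac1a\Bigr).
\]

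\textbf{Step 2: condition on the other arms.} Both $f_0$ and $f_1$ are mixtures over the common law of $B_{-1}=(B_2,\dots,B_m)$. By convexity of the $L^1$ norm, it is enough to bound the $L^1$ distance conditionally on a fixed $B_{-1}=b$, with $s=\sum_{i\ge2}b_i$.

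Given $b$ and $\beta$, we have $Z=mU+\eta(s+\beta)$. So $Z$ is uniform on $[ma+\eta(s+\beta),\,mb+\eta(s+\beta)]$ with density $1/(m(b-a))$. Given $Z=z$, we recover $U=(z-\eta(s+\beta))/m$, and $I=i$ with probability $X_i/z$, where $X_i=U+\eta b_i$ (with $b_1=\beta$).

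We split $\|f_1-f_0\|_1$ into two parts.
\begin{itemize}
\item \emph{Non-overlapping slivers of the $Z$-support.} The two uniform supports are shifted by $\eta$. The assumption $\eta<m(b-a)$ guarantees that they overlap. The non-overlapping slivers contribute at most $2\eta/(m(b-a))$.
\item \emph{Overlap region.} Here the $Z$-densities coincide, and only the conditional law of $I$ changes. Switching $\beta$ from $0$ to $1$ changes $U$ by $-\eta/m$ and $X_1$ by $\eta(1-1/m)$. Hence
\[
\sum_i\bigl|\Pr(I=i\mid z,1)-\Pr(I=i\mid z,0)\bigr|\le\frac{(m-1)\eta/m+\eta(1-1/m)}{z}\le\frac{2\eta}{z}\le\frac{2\eta}{ma},
\]
using $z\ge ma$. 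Integrating against the $Z$-density keeps this contribution at most $2\eta/(ma)$.
\end{itemize}

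Summing the two parts gives
\[
\|f_1-f_0\|_1\le\frac{2\eta}{m}\Bigl(\frac{1}{b-a}+\frac1a\Bigr),
\]
which is within the target bound. Plugging this into Step 1 yields the statement.

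\textbf{Main obstacle.} The delicate part is the bookkeeping in Step 2. The index $I$ is not a function of $Z$ alone, and the change of $B_1$ shifts both $U$ and the composition of $X$. One must check that the contributions from the non-overlapping slivers and from the index laws combine correctly into a single $L^1$ estimate, rather than being treated via a chain rule (which would require separately controlling $\kl(Q_Z\|P_Z)$, infinite here due to support mismatch). The $\chi^2$-to-$L^1$ reduction of Step 1 is exactly what sidesteps that issue, since $P_Y$ dominates both mixture components on the union of supports.
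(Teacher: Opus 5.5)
Your proof is correct and follows the paper's argument essentially step for step. Both use the mixture decomposition over $B_1$, the $\chi^2$/triangular-discrimination reduction to total variation (which the paper packages as Lemma~\ref{lem:mixtures}), conditioning on $(B_2,\dots,B_m)$ by convexity, and the split into the two non-overlapping slivers of the $Z$-support plus the overlap region where only the law of $I$ changes. Your direct $L^1$ accounting halves the paper's sliver term, giving $\frac{1}{b-a}$ where the paper's set-wise triangle inequality gives $\frac{2}{b-a}$. One side remark in your closing paragraph is inaccurate: $P_Z$ and $Q_Z$ have identical supports, so $\kl(Q_Z\|P_Z)$ is finite. Only the conditional laws of $Z$ given $B_1$ have mismatched supports.
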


\begin{proof}
    That $X_i\in[a,b+\eta]$ a.s.\ is clear from its definition.
    Let $H=B_1$ and $G=(B_2,\dots,B_m)$. Note that $G$ has the same law under $P$ and $Q$. For $g\in\{0,1\}^{m-1}$ and $h\in\{0,1\}$, let $\nu_h^g$ denote the conditional law of $Y$ given $H=h$ and $G=g$. This conditional law is the same under $P$ and $Q$. For $h\in\{0,1\}$, let $\nu_h=\EE{\nu_h^G}$. Then:
    \begin{equation}
        P_Y = (1-p-\delta)\nu_0 + (p+\delta)\nu_1, \qquad Q_Y = (1-p)\nu_0 + p\nu_1.
    \end{equation}
    Let $\Omega=\Reals\times [m]$ and $\mathcal{F}$ be its Borel $\sigma$-algebra. By Lemma \ref{lem:mixtures} with $q=p, q'=p+\delta$:
    \begin{equation}\label{eq:kl}
        \kl(Q_Y\|P_Y) \le \frac{2\delta^2}{\min\{p+\delta,1-p-\delta\}}\tv(\nu_0,\nu_1).
    \end{equation}
    The remainder of the proof is concerned with upper-bounding the above total variation distance.
    
    For $g\in\{0,1\}^{m-1}$, let $s(g)=\sum_{i=1}^{m-1}g_i$. Conditional on $H=h$ and $G=g$,
    \begin{equation}\label{eq:conditionals}
        \begin{aligned}
            & X_1 = U + \eta h, \\
            &X_i = U + \eta g_{i-1} \qquad \text{for } i=2,\dots,m,\\
            &Z = mU + \eta(h+s(g)).
        \end{aligned}
    \end{equation}
    Let $\lambda_h^g$ be the law of $Z$ conditional on $H=h$ and $G=g$, which is uniform over the following interval: 
    \begin{equation}
        \mathcal{I}_h^g \coloneqq [ma+\eta(h+s(g)), mb+\eta(h+s(g))].
    \end{equation}
    For any $h\in\{0,1\},g\in\{0,1\}^{m-1}$, the length of $\mathcal{I}_h^g$ is equal to $L\coloneqq |\mathcal{I}_h^g| = m(b-a)$. Moreover, $\mathcal{I}_1^g=\mathcal{I}_0^g + \eta$ and, since $\eta <  m(b-a)$ by assumption, the two intervals overlap. In particular:
    \begin{equation}
        |\mathcal{I}_0^g\setminus\mathcal{I}_1^g| = |\mathcal{I}_1^g\setminus\mathcal{I}_0^g| = \eta, \qquad |\mathcal{I}_0^g\cap \mathcal{I}_1^g| = L-\eta.
    \end{equation}
    Since $\lambda_h^g$ is uniform:
    \begin{equation}\label{eq:measure2}
        \lambda_h^g(\mathcal{I}_0^g\setminus\mathcal{I}_1^g) = \lambda_h^g(\mathcal{I}_1^g\setminus\mathcal{I}_0^g) = \frac{\eta}{L}, 
        \qquad \lambda_h^g(\mathcal{I}_0^g\cap \mathcal{I}_1^g) = \frac{L-\eta}{L}.
    \end{equation}
    Fix a point $z\in \mathcal{I}_0^g\cap \mathcal{I}_1^g$. Let $u_h^g(z)$ be the value of $U$ such that, conditional on $H=h$ and $G=g$, $Z=z$. By Equation \eqref{eq:conditionals}:
    \begin{equation}
        u_h^g(z) = \frac{z-\eta(h+s(g))}{m}.
    \end{equation}
    Note that $u_0^g(z)-u_1^g(z)=\eta/m$. Define:
    \begin{equation}
        x_1(z;h,g)\coloneqq u_h^g(z)+\eta h,\qquad x_i(z;h,g)\coloneqq u_h^g(z) + \eta g_{i-1} \quad\text{for } i=2,\dots,m.   
    \end{equation}
    These are the values of $X_1,\dots,X_m$ such that, conditional on $H=h$ and $G=g$, $Z=z$. Let $\pi_h^g(\cdot|z)$ be the conditional distribution of $I$ given $H=h$, $G=g$ and $Z=z$, that is:
    \begin{equation}
        \pi_h^g(i|z) = \frac{x_i(z;h,g)}{z}.
    \end{equation}
    Then, recalling that $m\ge 2$,
    \begin{align}
        \tv(\pi_0^g(\cdot|z), \pi_1^g(\cdot|z)) 
        &= \frac{1}{2z}\sum_{i=1}^m \left|x_i(z;0,g) - x_i(z;1,g)\right| \\
        &= \frac{1}{2z}\left(\left|u_0^g(z)-u_1^g(z)-\eta\right| + \sum_{i=2}^m\left|u_0^g(z)-u_1^g(z)\right|\right) \\
        &=\frac{1}{2z}\left(\eta\left(1-\frac{1}{m}\right) + (m-1)\frac{\eta}{m}\right)\\
        &=\frac{\eta}{z}\left(1-\frac{1}{m}\right)\\
        &< \frac{\eta}{z} \\
        &\le \frac{\eta}{ma}, \label{eq:tv}
    \end{align}
    since $z\in\mathcal{I}_0^g\cap \mathcal{I}_1^g$ implies $z\ge ma$.

    For any measurable set $\mathcal{Y}\subseteq \Reals\times[m]$, define the sections:
    \begin{equation}
        \mathcal{Y}_z \coloneqq \{i\in[m]:(z,i)\in \mathcal{Y}\} \quad \text{for } z\in\Reals.
    \end{equation}
    Then, for $h\in\{0,1\}$,
    \begin{equation}
        \nu_h^g(\mathcal{Y}) = \int_{\mathcal{I}_h^g}\pi_h^g(\mathcal{Y}_z|z)\lambda_h^g(\de z).
    \end{equation}
    In particular:
    \begin{equation}
        \begin{aligned}
            &\nu_0^g(\mathcal{Y}) = \int_{\mathcal{I}_0^g\setminus \mathcal{I}_1^g}\pi_0^g(\mathcal{Y}_z|z)\lambda_0^g(\de z) + \int_{\mathcal{I}_0^g\cap \mathcal{I}_1^g}\pi_0^g(\mathcal{Y}_z|z)\lambda_0^g(\de z),\\
            &\nu_1^g(\mathcal{Y}) = \int_{\mathcal{I}_1^g\setminus \mathcal{I}_0^g}\pi_1^g(\mathcal{Y}_z|z)\lambda_1^g(\de z) + \int_{\mathcal{I}_0^g\cap \mathcal{I}_1^g}\pi_1^g(\mathcal{Y}_z|z)\lambda_1^g(\de z).
        \end{aligned}
    \end{equation}
    By the triangle inequality:
    \begin{align}
        \left|\nu_0^g(\mathcal{Y})-\nu_1^g(\mathcal{Y})\right|
        &\le \underbrace{\left|\int_{\mathcal{I}_0^g\setminus \mathcal{I}_1^g}\pi_0^g(\mathcal{Y}_z|z)\lambda_0^g(\de z)\right|}_{(A)}
        + \underbrace{\left| \int_{\mathcal{I}_1^g\setminus \mathcal{I}_0^g}\pi_1^g(\mathcal{Y}_z|z)\lambda_1^g(\de z) \right|}_{(B)}
        \\&\qquad+ \underbrace{\left|\int_{\mathcal{I}_0^g\cap \mathcal{I}_1^g}\pi_0^g(\mathcal{Y}_z|z)\lambda_0^g(\de z) -  \int_{\mathcal{I}_0^g\cap \mathcal{I}_1^g}\pi_1^g(\mathcal{Y}_z|z)\lambda_1^g(\de z)\right|}_{(C)}.
    \end{align}
    Since $0\le \pi_h^g(\mathcal{Y}_z|z)\le 1$, by Equation \eqref{eq:measure2}:
    \begin{equation}
        (A) \le \int_{\mathcal{I}_0^g\setminus \mathcal{I}_1^g}1\lambda_0^g(\de z) = \lambda_0^g(\mathcal{I}_0^g\setminus \mathcal{I}_1^g) = \frac{\eta}{L} = \frac{\eta}{m(b-a)}.
    \end{equation}
    Similarly:
    \begin{equation}
        (B) \le \lambda_1^g(\mathcal{I}_1^g\setminus \mathcal{I}_0^g) =\frac{\eta}{m(b-a)}.
    \end{equation}
    For the last term, notice that $\lambda_0^g$ and $\lambda_1^g$ are identical on the intersection $\mathcal{I}_0^g\cap \mathcal{I}_1^g$, since they are both uniform on intervals of the same length $L$. Hence:
    \begin{align}
        (C) &= \left|\int_{\mathcal{I}_0^g\cap \mathcal{I}_1^g} \left(\pi_0^g(\mathcal{Y}_z|z)- \pi_1^g(\mathcal{Y}_z|z)\right) \lambda_0^g(\de z)\right| \\
        &\le \int_{\mathcal{I}_0^g\cap \mathcal{I}_1^g} \left|\pi_0^g(\mathcal{Y}_z|z)- \pi_1^g(\mathcal{Y}_z|z)\right|\lambda_0^g(\de z) \\
        &\le \int_{\mathcal{I}_0^g\cap \mathcal{I}_1^g} \sup_{B\subseteq [m]}\left|\pi_0^g(B|z)- \pi_1^g(B|z)\right|\lambda_0^g(\de z) \\
        &= \int_{\mathcal{I}_0^g\cap \mathcal{I}_1^g} \tv\left(\pi_0^g(\cdot|z),\pi_1^g(\cdot|z)\right)\lambda_0^g(\de z) \\
        &\le \frac{\eta}{ma}\int_{\mathcal{I}_0^g\cap \mathcal{I}_1^g}\lambda_0^g(\de z)  &&\text{(by Equation \ref{eq:tv})} \\
        &=\frac{\eta}{ma}\lambda_0^g(\mathcal{I}_0^g\cap \mathcal{I}_1^g) \\
        &\le \frac{\eta}{ma}.
    \end{align}
    Putting the three bounds together:
    \begin{equation}
         \left|\nu_0^g(\mathcal{Y})-\nu_1^g(\mathcal{Y})\right| \le \frac{\eta}{m(b-a)} + \frac{\eta}{m(b-a)} + \frac{\eta}{ma} = \frac{\eta}{m}\left(\frac{2}{b-a}+\frac{1}{a}\right).
    \end{equation}
    Since $\mathcal{Y}$ was arbitrary:
    \begin{equation}
        \tv(\nu_0^g,\nu_1^g) = \sup_{\mathcal{Y}\subseteq \Reals\times m}\left|\nu_0^g(\mathcal{Y})-\nu_1^g(\mathcal{Y})\right| \le \frac{\eta}{m}\left(\frac{2}{b-a}+\frac{1}{a}\right).
    \end{equation}
    Finally, by the convexity of the total variation:
    \begin{equation}
        \tv(\nu_0,\nu_1) = \tv(\EE{\nu_0^G},\EE{\nu_1^G}) \le \EE{\tv(\nu_0^G,\nu_1^G)} \le \frac{\eta}{m}\left(\frac{2}{b-a}+\frac{1}{a}\right). 
    \end{equation}
    Plugging this into Equation \eqref{eq:kl} concludes the proof.
\end{proof}

\lowerbx*
\begin{proof}
    The proof is divided in several steps. Recall that an instance of \problem has $K$ base arms, $m$-sized subsets, and the action space (expressed here in terms of incidence vectors) is $\mathcal{A}=\{a\in\{0,1\}^K \mid \|a\|_1=m\}$. At each round $t$, the learner plays an action $a_t\in\mathcal{A}$. 
    We design a distribution over problem instances that is hard for all \emph{deterministic} learners. By Yao's minimax principle, this is enough to prove the existence of a hard sequence of rewards for any stochastic learner.

    \paragraph{Step 1 (game definition). } Let $n=\lfloor m/2\rfloor$. Since $m\ge 2$, $n\ge m/3$. Let $\Sigma$ be the set of permutations of $[K]$, and $\sigma\sim\unif(\Sigma)$. For $k\in[K]$, let $a_\sigma^k\in\{0,1\}^K$ be the incidence vector of the set $\{\sigma(1),\dots,\sigma(k)\}$, that is:
    \begin{equation}
        a_\sigma^k(i) = \indi{\exists j\in[k]\mid \sigma(j)=i},
    \end{equation}
    with the convention $a_\sigma^0=0$. After drawing $\sigma$ uniformly at random, the adversary selects $n$ ``good'' arms $\sigma(1),\dots,\sigma(n)$. So, $a_\sigma^n$ is the incidence vector of the set of good arms. Note that $a_\sigma^n\notin \mathcal{A}$ since $\|a_\sigma^n\|_1=n<m$.

    \paragraph{Step 2 (adversary models). }
    For each choice of $\sigma$, we consider a nominal model $P^\sigma$ and an alternative model $Q^\sigma$ where one good arm is removed. 

    Consider the nominal model $P^\sigma$ first. Independently for each round $t$, the adversary samples a uniform random variable in $[1/4, 1/2]$:
    \begin{equation*}
        U_t \sim \unif\left(\left[\frac{1}{4},\frac{1}{2}\right]\right),
    \end{equation*}
    and Bernoulli random variables $\beta_t(1),\dots,\beta_t(K)$ such that $\beta_t(i)\sim \ber(p(i))$, where
    \begin{equation}
        p(i) = \frac{1}{4} + 4\varepsilon a_\sigma^n(i)
    \end{equation}
    and $\varepsilon \in \big(0,\frac{1}{16}\big)$, so that good arms yield a slightly larger expected reward. Note that $p(i)<1/2$ since $\varepsilon < 1/16$.
    These Bernoulli variables are mutually independent among each other and with $U_t$. Then, the adversary assigns the rewards to the $K$ base arms as follows:
    \begin{equation}
        r_t(i) = \frac{1}{4}\beta_t(i) + U_t.
    \end{equation}
    Note that $r_t(i)\in[1/4, 3/4]\subset [0,1]$, as required by the problem definition.
    Also:
    \begin{equation}
        \EEp{P^\sigma}{r_t(i)} = \frac{7}{16} + \varepsilon a_\sigma^n(i).
    \end{equation}
    So, if $i$ is a good arm and $j$ is not:
    \begin{equation}
        \EEp{P^\sigma}{r_t(i)-r_t(j)} = \varepsilon.
    \end{equation}
    
    In the alternative model $Q^\sigma$, the only difference lies in the definition of the Bernoulli parameters:
    \begin{equation}
        p(i) = \frac{1}{4} + 4\varepsilon a_\sigma^{n-1}(i)    
    \end{equation}
    which amounts to canceling one of the good arms, namely $\sigma(n)$. Hence:
    \begin{equation}
        \EEp{Q^\sigma}{r_t(i)} = \frac{7}{16} + \varepsilon a_\sigma^{n-1}(i).
    \end{equation}
    In the following, for a random variable $Y$, let $P^\sigma_Y$ denote its law under the nominal model $P^\sigma$, conditional on the choice of $\sigma$, and let $\EEp{P^\sigma}{\cdot}$ denote conditional expectation $\EE{\cdot\mid\sigma}$ under $P^\sigma$. We define $Q^\sigma_Y$ and $\EEp{Q^\sigma}{\cdot}$ analogously.

    \paragraph{Step 3 (regret decomposition). }
    Let $a^*_\sigma$ be any action in $\mathcal{A}$ such that, for all $i\in[K]$, $a_\sigma^n(i)=1 \implies a^*(i)=1$, that is, any action containing all $n$ good arms. For $i\in[K]$, let $T_i$ denote the number of times the learner selects arm $i$:
    \begin{equation}
        T_i = \sum_{t=1}^T a_t(i).
    \end{equation}
    
    In the following, the expectation is with respect to the random adversary, including the choice of $\sigma$.
    Then:
    \begin{align}
        R_T &= \EE{\max_{a\in\mathcal{A}}\sum_{t=1}^T\langle a,r_t\rangle - \sum_{t=1}^T \langle a_t, r_t\rangle} \\
        &= \EE{\EEp{P^\sigma}{\max_{a\in\mathcal{A}}\sum_{t=1}^T\langle a,r_t\rangle - \sum_{t=1}^T \langle a_t, r_t\rangle}} \\
        &\ge \EE{\EEp{P^\sigma}{\sum_{t=1}^T \langle a^*_\sigma - a_t, r_t\rangle}} \\
        &= \varepsilon\EE{\EEp{P^\sigma}{\sum_{j=1}^n (T-T_{\sigma(j)})}} \\
        &= \varepsilon\left(nT - \sum_{j=1}^n\EE{\EEp{P^\sigma}{T_{\sigma(j)}}}\right).
    \end{align}

    Let $M\coloneqq \EE{\EEp{P^\sigma}{T_{\sigma(1)}}}$. Since good arms are indistinguishable and $\sigma$ is sampled uniformly at random, by symmetry,
    \begin{equation}
        \EE{\EEp{P^\sigma}{T_{\sigma(j)}}} = M \qquad \text{for all } j \in[n].
    \end{equation}
    Hence,
    \begin{equation}
        R_T \ge \varepsilon n\left(T-M\right).\label{eq:new_regret}
    \end{equation}

    \paragraph{Step 4 (information-theoretic argument). }
    Let $M'=\EE{{\EEp{Q^\sigma}{T_{\sigma(n)}}}}$, the expected number of times the "canceled" good arm is played under the alternative model. Since bad arms are indistinguishable and $\sigma$ is uniformly random, by symmetry,
    \begin{equation}
        \EE{{\EEp{Q^\sigma}{T_{\sigma(j)}}}} = M' \qquad \text{for } j=n,\dots,K.
    \end{equation}
    Hence:
    \begin{equation}
        (K-n+1)M' = \sum_{j=n}^K \EE{{\EEp{Q^\sigma}{T_{\sigma(j)}}}} \le mT,
    \end{equation}
    which implies:
    \begin{equation}\label{eq:mprime}
        M' \le \frac{mT}{K-n+1}.
    \end{equation}

    Now fix $\sigma$. Let $Y_t=(r_t(S_t), I_t)$ denote the observation of the learner at round $t$, and $Y_{1:t}=(Y_1,\dots,Y_t)$ the sequence of observations up to round $t$ included. Since $T_j\in\{0,\dots,T\}$:
    \begin{align}
        \EEp{P^\sigma}{T_{\sigma(n)}} - \EEp{Q^\sigma}{T_{\sigma(n)}} 
        &\le T \tv\left(P^\sigma_{T_{\sigma(n)}},Q^\sigma_{T_{\sigma(n)}}\right) \\
        &\le T\sqrt{\frac{1}{2}\kl\left(Q^\sigma_{T_{\sigma(n)}}\|P^\sigma_{T_{\sigma(n)}}\right)} \label{eq:pinsker2}\\ 
        &\le T\sqrt{\frac{1}{2}\kl\left(Q^\sigma_{Y_{1:T}}\|P^\sigma_{Y_{1:T}}\right)}\label{eq:data} \\
        &= T\sqrt{\frac{1}{2}\sum_{t=1}^T \EEp{Q^\sigma}{\kl\left(Q^\sigma_{Y_t\mid Y_{1:t-1}}\|P^\sigma_{Y_t\mid Y_{1:t-1}}\right)}},\label{eq:chain2}
    \end{align}
    where Equation \eqref{eq:pinsker2} is by Pinsker's inequality; Equation \eqref{eq:data} is by the data-processing inequality, since, for a deterministic learner, $T_{\sigma(n)}$ is a measurable function of the observations; the last equality is by the chain rule of the KL divergence. 

    Now, fix a round $t$ and consider the KL divergence in Equation \eqref{eq:chain2}. Conditional on $Y_{1:t-1}$, the deterministic learner selects the same action under $P^\sigma$ and $Q^\sigma$: the conditional observation laws differ (and contribute to the KL) only if this action contains $\sigma(n)$. Hence:
    \begin{align}
        \EEp{Q^\sigma}{\kl\left(Q^\sigma_{Y_t\mid Y_{1:t-1}}\|P^\sigma_{Y_t\mid Y_{1:t-1}}\right)} 
        &= \EEp{Q^\sigma}{a_t(\sigma(n))\kl\left(Q^\sigma_{Y_t\mid Y_{1:t-1}}\|P^\sigma_{Y_t\mid Y_{1:t-1}}\right)}.
    \end{align}
     So, we assume $a_t(\sigma(n))=1$ and bound the KL divergence.
     Let $B_1,\dots,B_m$ denote a re-indexing of the Bernoulli draws $\beta_i$ for $i\in S_t$, the base arms selected by the learner at the round of interest, such that $\sigma(n)$ corresponds to $B_1$. We can apply Lemma \ref{lem:kl} with $a=1/4$, $b=1/2$, $\eta=1/4$, $p=1/4$, $\delta=4\varepsilon<1/4$, and $q_i\in\{1/4,1/4+4\varepsilon\}$ are the Bernoulli parameters of the other $m-1$ selected arms, which are identical under $P^\sigma$ and $Q^\sigma$. Then:
    \begin{equation}\label{eq:kl384}
        \kl\left(Q^\sigma_{Y_t\mid Y_{1:t-1}}\|P^\sigma_{Y_t\mid Y_{1:t-1}}\right) \le a_t(\sigma(n))\frac{384 \varepsilon^2}{m}. 
    \end{equation}
    Note that, since $m\ge 2$, $\eta = 1/4 < m/4 = m(b-a)$, as required by the lemma. Plugging this into Equation \eqref{eq:chain2}:
    \begin{align}
        \EEp{P^\sigma}{T_{\sigma(n)}} - \EEp{Q^\sigma}{T_{\sigma(n)}} 
        &\le 8T\varepsilon\sqrt{\frac{3}{m}\sum_{t=1}^T\EEp{Q^\sigma}{a_t(\sigma(n))}} \\
        &=8T\varepsilon\sqrt{\frac{3}{m}\EEp{Q^\sigma}{T_{\sigma(n)}}}.
    \end{align}
    Averaging over $\sigma$, by Jensen's inequality:
    \begin{equation}
         \EE{\EEp{P^\sigma}{T_{\sigma(n)}}} - \EE{\EEp{Q^\sigma}{T_{\sigma(n)}}} 
         \le \EE{8T\varepsilon\sqrt{\frac{3}{m}\EEp{Q^\sigma}{T_{\sigma(n)}}}}
         \le 8T\varepsilon\sqrt{\frac{3}{m}\EE{\EEp{Q^\sigma}{T_{\sigma(n)}}}}.
    \end{equation}
    Equivalently:
    \begin{equation}
        M \le M' + 8T\varepsilon\sqrt{\frac{3M'}{m}}.
    \end{equation}
    By Equation \eqref{eq:mprime}:
    \begin{equation}
        M \le \frac{mT}{K-n+1} + 8T\varepsilon\sqrt{\frac{3T}{K-n+1}}.\label{eq:final}
    \end{equation}

    \paragraph{Step 5 (regret lower bound). }
    Lemma \ref{lem:kl}, as applied in the previous step, also ensures that, under both models, the rewards $r_t(i)=X_i$ for $i\in S_t$ are a.s.\ in $[1/4, 3/4]\subseteq [0,1]$, as required by the \problem problem definition. This also ensures that $r_t(S_t)>0$ almost surely. 
    
    Let $\kappa = K-n+1$. Since $K\ge 2m$ and $n\le m/2$:
    \begin{equation}
        \frac{K-n+1}{m} \ge \frac{2m - \frac{m}{2} + 1}{m} \ge \frac{3}{2},
    \end{equation}
    so $\kappa\ge 3/2 m$.
    By plugging the upper bound from Equation \eqref{eq:final} into Equation \eqref{eq:new_regret}, we obtain:
    \begin{equation}
        R_T \ge \varepsilon nT\left(1-\frac{m}{\kappa} - 8\varepsilon\sqrt{\frac{3T}{\kappa}}\right) \ge \varepsilon nT\left(\frac{1}{3} - 8\varepsilon\sqrt{\frac{3T}{\kappa}}\right).
    \end{equation}
    Solving for $\varepsilon$:
        \begin{equation}
            R_T \ge \frac{n}{288\sqrt{3}}\sqrt{\kappa T},
        \end{equation}
        for $\varepsilon=\frac{1}{48}\sqrt{\frac{\kappa}{3T}}\le \frac{1}{48}\sqrt{\frac{K}{3T}}$. Since $T>K/27$, $\varepsilon<1/16$ as required by the previous steps. Finally, recall that $m/3\le n\le m/2$ and $K\ge 2m$, so $\kappa = K-n+1\ge K-m/2 \ge 3/4 K$, so:
        \begin{equation}
            R_T \ge \frac{m\sqrt{K T}}{1728}.
        \end{equation}
    \end{proof}

\section{Proofs of Section~\ref{sec:other}}\label{app:profit}
\linearlb*
\begin{proof}
    We partition the arm set $[K]$ into \emph{special} arms $1$ and $2$ and \emph{common} arms $C=\{3,\dots,K\}$. Let $n\coloneqq |C| = K-2$ for brevity. Define
    \begin{align*}
        \alpha = \frac{n-m+1}{n},\qquad \beta = \frac{m(m-1)}{2n}, \qquad Z = 1 + \alpha + \beta. 
    \end{align*}
    Fix a round $t$ and consider two probability models $P,Q$. Under $P$:
    \begin{itemize}
        \item With probability $1/Z$, let $r_t(1)=0$, $r_t(2)=1$, and assign reward $1$ to one arm sampled uniformly at random from $C$, zero reward to the others; we call this event $E$;
        \item With probability $\alpha/Z$, let $r_t(1)=1$, $r_t(2)=0$, and $r_t(i)=0$ for all $i\in C$; we call this event $A$;
        \item With probability $\beta/Z$, let $r_t(1)=1$, $r_t(2)=0$, and $r_t(i)=1$ for all $i\in C$. We call this event $B$.
    \end{itemize}
    Under $Q$, the rewards of arms $1$ and $2$ are exchanged. We are going to prove that $Y_t=(I_t, r_t(I_t))$ has the same law under $P$ and $Q$. If $S_t$ does not contain arm $1$ nor $2$, this is obvious. We are going to consider three remaining cases.
    
    \paragraph{Case 1: $1\in S_t$, $2\notin S_t$.}
    There must be $s\coloneqq m-1$ common arms in $S_t$ ($|S_t\cap C|=s$). Let $X$ be the number of active arms (arms with a reward of one) in $S_t\cap C$. On the event $E$, $C$ contains exactly one active arm. So, regardless of the choice of $S_t$, due to the uniform sampling, $S_t\cap C$ contains an active arm with probability $s/n$. Then, under $P$, the probability that the learner gets zero total reward is:
    \begin{align*}
        P(r_t(S_t)=0) = \frac{1}{Z}P(r_t(S_t)=0\mid E) + \frac{\alpha}{Z}P(r_t(S_t)=0\mid A) + \frac{\beta}{Z}P(r_t(S_t)=0\mid B) = \frac{1}{Z}\left(1-\frac{s}{n}\right),
    \end{align*}
    since, under $A$ and $B$, the learner gains positive reward from arm $1\in S_t$. Similarly, under $Q$:
    \begin{align*}
        Q(r_t(S_t)=0) = \frac{1}{Z} \cdot 0 + \frac{\alpha}{Z} \cdot 1 + \frac{\beta}{Z} \cdot 0 = \frac{1}{Z}\frac{n-m+1}{n} = \frac{1}{Z}\left(1-\frac{s}{n}\right) = P(r_t(S_t)=0).
    \end{align*}
    When $r_t(S_t)=0$, all arms have the same probability of being the winner, and this is the only case in which a winner can be observed with a reward of zero. So, for all $i\in S_t$, $P(Y_t=(i, 0))=Q(Y_t=(i, 0))$. Now consider the probability of observing arm $1$ as the winner with a reward of one. Under $P$:
    \begin{align*}
        P(Y_t=(1,1)) = \frac{1}{Z}\cdot 0 + \frac{\alpha}{Z}\cdot 1 + \frac{\beta}{Z} \cdot \frac{1}{m} = \frac{1}{Z}\left(\alpha + \frac{\beta}{m}\right) = \frac{1}{Z}\left(1-\frac{s}{2n}\right),
    \end{align*}
    since under event $E$ arm $1$ cannot be the winner, under event $A$ it is the only active arm, and under $B$ it competes with other $s=m-1$ active arms. Under $Q$, $E$ is the only event in which arm $1$ can be the winner. If $S_t\cap C$ does not contain an active arm, arm $1$ is automatically the winner, otherwise it has $1/2$ probability. So:
    \begin{align*}
        Q(Y_t=(1,1)) = \frac{1}{Z}\left(\left(1-\frac{s}{n}\right)\cdot 1 + \frac{s}{n} \cdot \frac{1}{2}\right) + \frac{\alpha}{Z} \cdot 0 + \frac{\beta}{Z} \cdot 0 = \frac{1}{Z}\left(1-\frac{s}{2n}\right) = P(Y_t=(1,1)). 
    \end{align*}
    By normalization and the exchangeability of the common arms, also $P(Y_t=(i,1))=Q(Y_t=(i,1))$ for all $i\in C$, so $Y_t$ has the same law under $P$ and $Q$.

    \paragraph{Case 2: $1\notin S_t$, $2\in S_t$. } As in Case 1, by symmetry.

    \paragraph{Case 3: $1,2\in S_t$. } In this case $S_t$ contains $s'=m-2$ common arms. Since it always contains both special arms, $P(r_t(S_t)=0) = Q(r_t(S_t)=0) = 0$, so also $P(Y_t=(i,0)) = 0 = Q(Y_t=(i,0))$ for all $i\in S_t$. As for arm $1$:
    \begin{align*}
        P(Y_t=(1,1)) = \frac{1}{Z} \cdot 0 + \frac{\alpha}{Z} \cdot 1 + \frac{\beta}{Z} \cdot \frac{1}{m-1} = \frac{1}{Z}\left(1-\frac{s'}{2n}\right),
    \end{align*}
    since, under event $B$, arm $1$ is competing with other $s'=m-2$ active arms. Also:
    \begin{align*}
        Q(Y_t=(1,1)) = \frac{1}{Z}\left(\left(1-\frac{s'}{n}\right)\cdot 1 + \frac{s'}{n}\cdot \frac{1}{2}\right) + \frac{\alpha}{Z}\cdot 0 + \frac{\beta}{Z}\cdot 0 = \frac{1}{Z}\left(1-\frac{s'}{2n}\right) = P(Y_t=(1,1)).
    \end{align*}
    The same calculation with the roles of arms $1$ and $2$ exchanged shows that $P(Y_t=(2,1))=Q(Y_t=(2,1))$.
    Hence, by exchangeability of the selected common arms and normalization, $Y_t$ has the same law under $P$ and $Q$.

    We have shown that $Y_t$ has the same law under $P$ and $Q$ conditional on any valid $S_t$. 

    Next, we compute the expected reward of all arms. Under $P$:
    \begin{equation}\label{eq:abc}
    \begin{aligned}
        &\EEp{P}{r_t(1)} = \frac{\alpha+\beta}{Z} \coloneqq a,\\
        &\EEp{P}{r_t(2)} = \frac{1}{Z} \coloneqq b,\\
        &\EEp{P}{r_t(i)} = \frac{1}{Z}\left(\frac{1}{n}+\beta\right) \coloneqq c \qquad \text{for all } i\in C.
    \end{aligned}
    \end{equation}
    Note that $a,b,c<1$. Also:
    \begin{equation}\label{eq:abcdiff}
    \begin{aligned}
        &a-b = \frac{(m-2)(m-1)}{2nZ} > 0, \\
        &a-c = \frac{n-m}{nZ} \ge \frac{m-2}{nZ} > 0,
    \end{aligned}
    \end{equation}
    since $n=K-2\ge 2m-2$ and $m\ge 3$. Under $Q$, $a$ and $b$ are exchanged.
    
    Now let $G\subset C$ with $|G|=m-1$ be an arbitrary proper subset of good common arms, and let $\delta_G$ be the probability model that assigns reward $1$ to arms $i\in G$ and zero to all other arms. Let
    \begin{equation*}
        P_G = \frac{3}{4}\delta_G + \frac{1}{4}P, \qquad Q_G = \frac{3}{4}\delta_G + \frac{1}{4}Q.
    \end{equation*}
    Clearly, $Y_t$ has the same law under $P_G$ and $Q_G$ too. Also, from Equation~\eqref{eq:abc}:
    \begin{align*}
        &\EEp{P_G}{r_t(1)} = \frac{a}{4} < \frac{1}{4}, \\
        &\EEp{P_G}{r_t(2)} = \frac{b}{4} < \frac{1}{4}, \\
        &\EEp{P_G}{r_t(i)} = \frac{3}{4} + \frac{c}{4} \ge \frac{3}{4} \qquad \text{for all } i\in G, \\
        &\EEp{P_G}{r_t(i)} = \frac{c}{4} < \frac{1}{4} \qquad \text{for all } i\in C\setminus G,
    \end{align*}
    while $a$ and $b$ are exchanged under $Q_G$. By Equation~\eqref{eq:abcdiff}, the optimal composite actions under the two models are:
    \begin{equation}
        S^*_{P_G} = G \cup \{1\}, \qquad S^*_{Q_G} = G \cup \{2\}.
    \end{equation}
    Let $g\coloneqq (3+c)/4$. Since $a$ is the value of the optimal special arm under each model:
    \begin{equation}\label{eq:halfopt}
        \frac{\EEp{P_G}{r_t(S^*_{P_G})} + \EEp{Q_G}{r_t(S^*_{Q_G})}}{2} = (m-1)g+\frac{a}{4}. 
    \end{equation}
    Moreover,
    \begin{align*}
        \frac{\EEp{P_G}{r_t(i)}+\EEp{Q_G}{r_t(i)}}{2} = \begin{cases}
            g & i\in G \\
            \frac{c}{4} & i\in C\setminus G \\
            \frac{a+b}{8} &i \in \{1,2\}.
        \end{cases}
    \end{align*}
    Hence, for any $S\subseteq [K]$ such that $|S|=m$:
    \begin{equation}\label{eq:half}
        \frac{\EEp{P_G}{r_t(S)}+\EEp{Q_G}{r_t(S)}}{2} \le (m-1)g + \max\left\{\frac{c}{4},\frac{a+b}{8}\right\}.
    \end{equation}
    Putting Equations~\eqref{eq:halfopt} and~\eqref{eq:half} together:
    \begin{align}
        \frac{1}{2}\left(\EEp{P_G}{r_t(S^*_{P_G})-r_t(S)} + \EEp{Q_G}{r_t(S^*_{Q_G})-r_t(S)}\right)
        &\ge \frac{a}{4} - \max\left\{\frac{c}{4},\frac{a+b}{8}\right\} \nonumber\\
        &= \frac{1}{4}\min\left\{a-c, \frac{a-b}{2}\right\} \coloneqq \gamma > 0,\label{eq:gamma}
    \end{align}
    where positivity is by Equation~\eqref{eq:abcdiff}.

    Before the game starts, the adversary fixes the set of good common arms $G$ and draws the sequence of rewards from either $\mathbb{P}=P_G^{\otimes T}$ or $\mathbb{Q}=Q_G^{\otimes T}$. Since the conditional law of $Y_t$ is identical under $P_G$ and $Q_G$ for any action of the learner, the whole feedback sequence, and hence the learner's actions, have the same law under $\mathbb{P}$ and $\mathbb{Q}$. So, summing Equation~\eqref{eq:gamma} over time:
    \begin{equation*}
        \frac{\EEp{\mathbb{P}}{R_T}+\EEp{\mathbb{Q}}{R_T}}{2} \ge \gamma T.
    \end{equation*}
    This means that at least one of the two models induces an expected regret of at least $\gamma T$. Since each is a distribution over deterministic oblivious reward sequences, averaging yields a deterministic oblivious reward sequence on which the learner's regret is at least $\gamma T$. The theorem's statement is obtained by substituting the values of $a,b,c,Z,n$ in the expression of $\gamma$.
\end{proof}

\lowermulti*
\begin{proof}
    Consider the same construction and regret composition introduced in Steps 1--3 of the proof of Theorem~\ref{th:lower2}. These steps do not depend on the feedback mechanism, so they carry on without modifications. What changes is the KL upper bound from Step 4. Fix permutation $\sigma$, round $t$, and the past observations $Y_{1:t-1}$, where $Y_t=(J_t, r_t(J_t))$, differently from the original proof. 
    Conditional on the past, action $S_t$ is fixed.
    The reward distributions differ only on arm $\sigma(n)$. Hence, by Lemma~\ref{lem:kluniform}:
    \begin{align*}
        \kl(Q^\sigma_{Y_t|Y_{1:t-1}}\|P^\sigma_{Y_t|Y_{1:t-1}}) 
        &= \frac{1}{m}\sum_{i\in S_t}\kl(Q^\sigma_{r_t(i)}\|P^\sigma_{r_t(i)}) \\
        &= \frac{\indi{\sigma(n)\in S_t}}{m}\kl(Q^\sigma_{r_t(\sigma(n))}\|P^\sigma_{r_t(\sigma(n))}) \\
        &\le \frac{\indi{\sigma(n)\in S_t}}{m}\kl(Q^\sigma_{\beta_t(\sigma(n))}\|P^\sigma_{\beta_t(\sigma(n))}) \\
        &=\frac{\indi{\sigma(n)\in S_t}}{m}\klber\left(\frac{1}{4}\middle\|\frac{1}{4}+4\varepsilon\right),
    \end{align*}
    by the data-processing inequality, since $r_t(i)=\frac{1}{4}\beta_t(i)+U_t$ and $U_t$ is the same under the two models. Here $\klber(p\|q) \coloneqq \kl(\ber(p)\|\ber(q)) = p\log(p/q)+(1-p)\log((1-p)/(1-q))$ denotes the KL divergence of two Bernoulli distributions with parameters $p,q$. 
    Let $f(q)=\klber(1/4\| q)$. Since $f(1/4)=f'(1/4)=0$ and $\varepsilon < 1/16$, Taylor's theorem yields:
    \begin{equation*}
        f\left(\frac{1}{4}+4\varepsilon\right) \le \frac{1}{2}\sup_{x\in[1/4,1/2]}f''(x)(4\varepsilon)^2 = 8 \varepsilon^2\sup_{x\in[1/4,1/2]}f''(x).
    \end{equation*}
    Since $f''$ is convex on $[1/4,1/2]$, $\sup_{x\in[1/4,1/2]}f''(x)\le \max\{f''(1/4),f''(1/2)\}=16/3$. Hence:
    \begin{equation*}
        \klber\left(\frac{1}{4}\middle\|\frac{1}{4}+4\varepsilon\right) \le \frac{128}{3}\varepsilon^2,
    \end{equation*}
    and
    \begin{equation}\label{eq:kluniform}
    \kl(Q^\sigma_{Y_t|Y_{1:t-1}}\|P^\sigma_{Y_t|Y_{1:t-1}}) \le a_t(\sigma(n))\frac{128\varepsilon^2}{3m}.
    \end{equation}
    Equation~\eqref{eq:kluniform} replaces Equation~\eqref{eq:kl384} in Step 4 of the proof of Theorem~\ref{th:lower2}. The proof continues in the same way, just for different constants. The choice of $\varepsilon$ is $\frac{1}{48}\sqrt{\frac{3\kappa}{T}}$ which requires $T>K/3$ to ensure $\varepsilon<1/16$ and yields
    \begin{equation*}
        R_T \ge \frac{n\sqrt{3\kappa T}}{288} \ge \frac{m\sqrt{KT}}{576}.
    \end{equation*}
\end{proof}

\profitlb*
\begin{proof}
    Let $\{1,2\}$ be \emph{core} arms and let $D=\{3,\dots,K\}$ be \emph{dummy} arms that always have zero reward in all rounds.
    Since $K>m$, $|D|=K-2\geq m-1$. Consequently, there exist valid actions $S$ satisfying
    \begin{equation*}
        S\cap\{1,2\}=\{1\},
        \qquad
        S\cap\{1,2\}=\{2\},
        \qquad\text{or}\qquad
        S\cap\{1,2\}=\{1,2\}.
    \end{equation*}
    Fix a round $t$ and define two probability models $P,Q$ for $(r_t(1), r_t(2))$ as follows:
    \begin{equation}
    \begin{array}{c|cccccc}
        (r_t(1),r_t(2))
        &(0,\frac23)&(\frac13,\frac23)&(\frac13,1)
        &(\frac23,0)&(\frac23,\frac13)&(1,\frac13)\\ \hline
        P
        &\frac16&\frac16&\frac16&\frac16&\frac16&\frac16 \\
    \end{array}
    \end{equation}
    and
    \begin{equation}
    \begin{array}{c|cccc}
        (r_t(1),r_t(2))
        &(0,1)&(\frac13,\frac13)&(\frac23,\frac23)&(1,0)\\ \hline
        Q
        &\frac16&\frac13&\frac13&\frac16
    \end{array}
    \end{equation}
    We show that feedback $Y_t=(r_t(S_t), I_t)$ has the same law, conditional on $S_t$, under the two models. First, consider the individual rewards of the core arms. For $i\in\{1,2\}$:
    \begin{align*}
        &P(r_t(i)=0) = Q(r_t(i)=0) = \frac{1}{6}, \\ 
        &P(r_t(i)=1) = Q(r_t(i)=1) = \frac{1}{6}, \\
        &P\left(r_t(i)=\frac{1}{3}\right) = Q\left(r_t(i)=\frac{1}{3}\right) = \frac{1}{3}, \\
        &P\left(r_t(i)=\frac{2}{3}\right) = Q\left(r_t(i)=\frac{2}{3}\right) = \frac{1}{3},
    \end{align*}
    So the marginal law of each core-arm reward is the same under the two models.
    The law of $Y_t=(r_t(S_t), I_t)$ depends on how many core arms are contained in $S_t$. We consider the three cases separately.

    \paragraph{Case 1: $|\{1,2\}\cap S_t|=1$. } $S_t$ contains exactly one core arm $j\in\{1,2\}$. Total reward $r_t(S_t)$ equals the reward of the core arm $r_t(j)$. If $r_t(j)>0$, $j$ is the winner. Otherwise, $I_t\sim \unif(S_t)$. In both cases, the law of $Y_t$ is the same under $P$ and $Q$.

    \paragraph{Case 2: $|\{1,2\}\cap S_t|=2$. } $S_t$ contains both core arms. The only possible values of the total reward are $r_t(S_t)\in\{2/3, 1, 4/3\}$. Since they are all positive, the winner arm is always a core arm. 
    Direct computation shows that, for any valid $S\supseteq \{1,2\}$:
    \begin{equation*}
        P(Y_t=(\lambda, i)|S_t=S) = Q(Y_t=(\lambda, i)|S_t=S) = \frac{1}{6} \qquad \text{for all } \lambda\in\left\{\frac{2}{3},1,\frac{4}{3}\right\} \text{ and } i\in\{1,2\}. 
    \end{equation*}

    \paragraph{Case 3: $\{1,2\}\cap S_t=\emptyset$. } $S_t$ only contains dummy arms, so $r_t(S_t)=0$ and $I_t\sim\unif(S_t)$ under both models.

    We have proven that $P_{Y_t|S_t}=Q_{Y_t|S_t}$. Next, we compute expected winner rewards. Since dummy arms have reward zero, $\rho_t(S)$ only depends on the core set $\{1,2\}\cap S$. Hence, with slight abuse of notation, we write $\rho_t(\{1,2\}\cap S)$ in place of $\rho_t(S)$. Direct computation shows:
    \begin{align*}
        &\EEp{P}{\rho_t(\{1\})} = \EEp{Q}{\rho_t(\{1\})} = \frac{1}{2},\\
        &\EEp{P}{\rho_t(\{2\})} = \EEp{Q}{\rho_t(\{2\})} = \frac{1}{2},\\
        &\EEp{P}{\rho_t(\{1,2\})} = \frac{37}{54},\qquad
        \EEp{Q}{\rho_t(\{1,2\})} = \frac{2}{3}.
    \end{align*}

    Now let $\delta$ be the point mass at $(1,1/3)$ and let:
    \begin{equation*}
        P_+ = \frac{19}{37}\delta + \frac{18}{37}P, \qquad
        Q_+ = \frac{19}{37}\delta + \frac{18}{37}Q.
    \end{equation*}

    Clearly, the law of $Y_t$ is also the same under $P_+$ and $Q_+$. As for the expected winner reward:
    \begin{equation*}
        \EEp{\delta}{\rho_t(\{1\})} = 1, \qquad \EEp{\delta}{\rho_t(\{2\})} = \frac{1}{3}, \qquad \EEp{\delta}{\rho_t(\{1,2\})} = \frac{5}{6}.
    \end{equation*}
    So, the expected winner rewards under $P_+,Q_+$ are as follows. For any valid $S$:
        \begin{equation*}
    \begin{array}{c|cccc}
        &\{1,2\}\cap S=\emptyset
        &\{1,2\}\cap S=\{1\}
        &\{1,2\}\cap S=\{2\}
        &\{1,2\}\cap S=\{1,2\}\\ \hline
        \EEp{P_+}{\rho_t(S)}
        &0
        &\frac{168}{222}
        &\frac{92}{222}
        &\frac{169}{222}\\[.5em]
        \EEp{Q_+}{\rho_t(S)}
        &0
        &\frac{168}{222}
        &\frac{92}{222}
        &\frac{167}{222}
    \end{array}
    \end{equation*}
    Then, the optimal composite action under $P_+$ contains both core arms, while the optimal composite action under $Q_+$ only contains arm $1$. Optimal actions are not unique, but we can fix an optimal action $S^*_{P_+}$ under $P_+$ and an optimal action $S^*_{Q_+}$ under $Q_+$.
    In both cases, the minimum gap with respect to a suboptimal action is $1/222$. Hence, for any fixed valid action $S$:
    \begin{equation}\label{eq:444}
        \frac{1}{2}\left(\EEp{P_+}{\rho_t(S^*_{P_+})} - \EEp{P_+}{\rho_t(S)}+\EEp{Q_+}{\rho_t(S^*_{Q_+})} - \EEp{Q_+}{\rho_t(S)}\right) \ge \frac{1}{444},
    \end{equation}
    since either $S$ contains both core arms, in which case the regret under $Q_+$ is $1/222$, or it does not, in which case the regret under $P_+$ is at least $1/222$.

    Before the game starts, the adversary draws the sequence of rewards from either $\mathbb{P}=P_+^{\otimes T}$ or $\mathbb{Q}=Q_+^{\otimes T}$. Since the conditional law of $Y_t$ is identical under $P_+$ and $Q_+$ for any action of the learner, the whole feedback sequence, and hence the learner's actions, have the same law under $\mathbb{P}$ and $\mathbb{Q}$. So, summing Equation~\eqref{eq:444} over time:
    \begin{equation*}
        \frac{\EEp{\mathbb{P}}{R_T}+\EEp{\mathbb{Q}}{R_T}}{2} \ge \frac{T}{444}.
    \end{equation*}
    This means that at least one of the two models induces an expected regret of at least $T/444$. Since each is a distribution over deterministic oblivious reward sequences, averaging yields a deterministic oblivious reward sequence on which the learner's regret is at least $T/444$.
\end{proof}

\winwin*
\begin{proof}
    Let $M={K\choose m}$. We prove the upper and lower bound separately.

    \paragraph{Upper bound. }
    Consider each $S\subseteq[K]$ with $|S|=m$ as an arm of an adversarial $M$-armed bandit.
    Without loss of generality, assume that the winner indexes $J_t(S)\sim q_t(\cdot | S)$ and the reward vectors $r_t$ are generated before the game starts for every arm $S$ and all $t\in[T]$. 
    Let $X_t(S)=r_t(J_t(S))$ denote the stochastic reward of arm $S$.
    Note that, for every arm $S$, $X_t(S)\in[0,1]$ and $\EE{X_t(S)}=\rho_t(S)$.
    Upon playing $S_t$, the agent observes and gains $X_t(S_t)$, which is distributed exactly as $r_t(I_t)$. The observed index $I_t$ is ignored.
    Run INF~\citep{AudibertB10} on the $M$ arms with losses $\ell_t(S)=1-X_t(S)$. By~\citep[][Theorem 11]{AudibertB10}, $R_T\le 8\sqrt{MT}$.

    \paragraph{Lower bound. }
    By Yao's minimax principle, it is enough to construct a distribution over oblivious reward sequences under which every deterministic learner suffers the desired regret.

    Consider a reference model $P$ where base-arm rewards are $r_t(i)\sim\ber(1/2)$ independently for all $i\in[K]$ and $t\in[T]$. Fix $\varepsilon\in(0,1/2]$. For every valid action $S\subseteq[K]$ with $|S|=m$, let $P^S$ be an alternative model where $r_t(i)\sim \ber(1/2)$ for all $i\notin S$, and:
    \begin{equation*}
        P^S(v_t(S)=x) = 2^{-m}\left(1-\varepsilon(-1)^{\sum_{i=1}^m x_i}\right), \qquad x\in\{0,1\}^m,
    \end{equation*}
    where $v_t(S)\in \{0,1\}^m$ denotes the vector of the rewards of the base arms in $S$.
    To see that these probabilities sum to one, note that the parity terms cancel out when summing over $x\in\{0,1\}^m$.
    The rewards outside $S$ are mutually independent and independent of $v_t(S)$.
    
    For any proper subset $A\subset S$ and $y\in\{0,1\}^{|A|}$,
    \begin{align}
        P^S(v_t(A)=y) &= \sum_{x\in\{0,1\}^{m-|A|}}2^{-m}\left(1-\varepsilon(-1)^{\sum_{i=1}^{m-|A|} x_i + \sum_{i=1}^{|A|}y_i}\right) \nonumber \\
        &=2^{m-|A|-m} - 2^{-m}\varepsilon(-1)^{\sum_{i=1}^{|A|}y_i}\sum_{x\in\{0,1\}^{m-|A|}}(-1)^{\sum_{i=1}^{m-|A|} x_i} \nonumber \\
        &= 2^{-|A|}, \label{eq:proper}
    \end{align}
    since the parity terms cancel out. The latter does not depend on $S$. So, whenever $S'\neq S$, $S'\cap S$ is a proper subset of $S$ and $v_t(S')$ has the same distribution under $P$ and $P^S$.
    
    Fix a valid action $A$. Under both $P$ and $P^S$, the distribution of $v_t(A)$ is invariant under permutation of the base arms in $A$. Indeed, under $P$ base rewards are i.i.d. Under $P^S$, when $A\neq S$, this follows from Equation~\eqref{eq:proper}, while, when $A=S$, the probability of the reward vector depends only on its parity. If $r_t(I_t)=0$, all base arms in $A$ have reward zero, so $I_t$ is uniform on $A$. If $r_t(I_t)=1$, for any model $Q\in\{P,P^S\}$ and any $i\in A$:
    \begin{equation*}
        Q(I_t=i, r_t(I_t)=1|S_t=A) = \EEp{Q}{\frac{r_t(i)}{r_t(A)}\indi{r_t(A)>0}}.
    \end{equation*}
    By permutation invariance, this probability is the same for every $i$. Moreover:
    \begin{equation*}
        \sum_{i\in A}Q(I_t=i,r_t(I_t)=1|S_t=A) = \EEp{Q}{\frac{\sum_{i\in A}r_t(i)}{r_t(A)}\indi{r_t(A)>0}|S_t=A} = Q(r_t(I_t)=1|S_t=A).
    \end{equation*}
    Since they are all the same,
    \begin{equation*}
        Q(I_t=i,r_t(I_t)=1|S_t=A) = \frac{1}{m}Q(r_t(I_t)=1|S_t=A).
    \end{equation*}
    So, conditional on the winner's reward, the winner's index is uniform on $S_t$.

    Since the rewards are binary, $r_t(I_t)=\indi{r_t(S_t)>0} = 1-\indi{v_t(S_t)=\mathbf{0}}$. Under $P$, $\rho_t(S)=1-P(v_t(S)=\mathbf{0}) = 1-2^{-m}$ for all valid $S$. Under $P^S$, $\rho_t(S')=1-2^{-m}$ for all $S'\neq S$. However, $\rho_t(S)=1-P^S(v_t(S)=\mathbf{0}) = 1-2^{-m}(1-\varepsilon)$. Hence, under $P^S$, $S$ is the unique optimal action, with a gap of $2^{-m}\varepsilon$ with respect to any other action. Let $T_S=\sum_{t=1}^T\indi{S_t=S}$ be the number of times the learner plays action $S$. Then:
    \begin{equation}\label{eq:gap}
        \EEp{P^S}{R_T} = 2^{-m}\varepsilon(T-\EEp{P^S}{T_S}).
    \end{equation}
    Let $\mathbb{P}$ and $\mathbb{P}^S$ denote the laws of the complete feedback sequence under $P$ and $P^S$, respectively. By the chain rule of the KL divergence,
    \begin{equation*}
        \kl(\mathbb{P}\|\mathbb{P}^S) = \sum_{t=1}^T\EEp{P}{\kl\left(P_{Y_t|Y_{1:t-1}}\|P^S_{Y_t|Y_{1:t-1}}\right)}.
    \end{equation*}
    Conditional on $Y_{1:t-1}$, $S_t$ is fixed. If $S_t\neq S$, $Y_t$ has the same law under the two models and the KL is zero. If $S_t=S$, the two conditional laws are $\unif(S)\otimes\ber(1-2^{-m})$ and $\unif(S)\otimes\ber(1-2^{-m}(1-\varepsilon))$. Since the uniform factor is identical under the two models:
    \begin{align}
        \kl(\mathbb{P}\|\mathbb{P}^S) 
        &= \sum_{t=1}^T\EEp{P}{\indi{S_t=S}}\klber(1-2^{-m}\|1-2^{-m}(1-\varepsilon)) \nonumber\\
        &=\EEp{P}{T_S}\klber(1-2^{-m}\|1-2^{-m}(1-\varepsilon))\nonumber\\
        &\le \EEp{P}{T_S}\frac{(2^{-m}\varepsilon)^2}{(1-2^{-m}(1-\varepsilon))2^{-m}(1-\varepsilon)} \nonumber\\
        &=\EEp{P}{T_S}\frac{2^{-m}\varepsilon^2}{(1-2^{-m}(1-\varepsilon))(1-\varepsilon)} \nonumber\\
        &\le 2^{-m}\EEp{P}{T_S}\frac{8\varepsilon^2}{3}, \label{eq:83}
    \end{align}
    where the first inequality is by upper bounding the KL by the Chi-squared divergence and the last one is by $1-\varepsilon\ge 1/2$ and $1-2^{-m}(1-\varepsilon)\ge 1-2^{-m}\ge 3/4$ since $\varepsilon\le1/2$ and $m\ge 2$. By Pinsker's inequality and Equation~\eqref{eq:83}:
    \begin{align}
        \EEp{P^S}{T_S} \le \EEp{P}{T_S} + T\tv(\mathbb{P},\mathbb{P}^S) 
        \le \EEp{P}{T_S} + T\sqrt{\frac{1}{2}\kl(\mathbb{P}\|\mathbb{P}^S)}
        \le \EEp{P}{T_S} + 2\varepsilon T\sqrt{\frac{2^{-m}}{3}\EEp{P}{T_S}}. \label{eq:pinsker3}
    \end{align}
    Since exactly one action is selected at each round, $\sum_{S\subseteq[K], |S|=m}\EEp{P}{T_S} = T$. Averaging Equation~\eqref{eq:gap} over all valid actions, by Equation~\eqref{eq:pinsker3} and Jensen's inequality:
    \begin{align*}
        \frac{1}{M}\sum_{S\subseteq[K], |S|=m}\EEp{P^S}{R_T} 
        &\ge 2^{-m}\varepsilon\left(T-\frac{1}{M}\sum_{S\subseteq[K], |S|=m}\EEp{P^S}{T_S}\right) \\
        &\ge 2^{-m}\varepsilon\left(T-\frac{1}{M}\sum_{S\subseteq[K], |S|=m} \EEp{P}{T_S} - \frac{2\varepsilon T}{M}\sum_{S\subseteq[K], |S|=m}\sqrt{\frac{2^{-m}}{3}\EEp{P}{T_S}}\right) \\
        &\ge 2^{-m}\varepsilon\left(T-\frac{1}{M}\sum_{S\subseteq[K], |S|=m} \EEp{P}{T_S} - 2\varepsilon T\sqrt{\frac{2^{-m}}{3M}\sum_{S\subseteq[K], |S|=m}\EEp{P}{T_S}}\right) \\
        &= 2^{-m}\varepsilon T\left(1-\frac{1}{M} - 2\varepsilon \sqrt{\frac{2^{-m}T}{3M}}\right).
    \end{align*}
    Let
    \begin{equation*}
        \varepsilon = \frac{1}{6}\sqrt{\frac{3M}{2^{-m}T}}.
    \end{equation*}
    Since $T\ge 2^m M /3$, $\varepsilon\le 1/2$ as required.
    Since $K>m\ge 2$, $M={K\choose m}\ge{3\choose 2} = 3$. So:
    \begin{equation*}
        1-\frac{1}{M} - 2\varepsilon \sqrt{\frac{2^{-m}T}{3M}} \ge 1-\frac{1}{3} -\frac{1}{3} = \frac{1}{3},
    \end{equation*}
    and
    \begin{equation*}
        \frac{1}{M}\sum_{S\subseteq[K], |S|=m}\EEp{P^S}{R_T} 
        \ge \frac{\sqrt{2^{-m}3MT}}{18}.
    \end{equation*}
    At least one model $P^S$ induces this much regret. Since each $P^S$ is a distribution over oblivious reward sequences, averaging yields a deterministic reward sequence with at least this much regret.
\end{proof}

\section{Auxiliary Lemmas}

\begin{lemma}\label{lem:mixtures}
    Let $\nu_0,\nu_1$ be two probability measures on measurable space $(\Omega,\mathcal{F})$. For $q\in(0,1)$, let $\mu_q=(1-q)\nu_0+q\nu_1$. Then, for any $q,q'\in(0,1)$:
    \begin{equation*}
        \kl(\mu_q\|\mu_{q'}) \le \frac{2(q-q')^2}{\min\{q',1-q'\}}\tv(\nu_0,\nu_1).
    \end{equation*}
\end{lemma}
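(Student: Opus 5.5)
We need to prove Lemma~\ref{lem:mixtures}: $\kl(\mu_q\|\mu_{q'}) \le \frac{2(q-q')^2}{\min\{q',1-q'\}}\tv(\nu_0,\nu_1)$.

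The plan is to bound the KL divergence by the $\chi^2$ divergence, $\kl(\mu_q\|\mu_{q'})\le \chi^2(\mu_q\|\mu_{q'})$, and then compute the $\chi^2$ divergence explicitly in terms of densities.

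\textbf{Step 1 (common dominating measure).} Let $\lambda=\nu_0+\nu_1$ and let $f_0,f_1$ be the densities of $\nu_0,\nu_1$ with respect to $\lambda$. Then $\mu_q$ has density $(1-q)f_0+qf_1$. The difference of the two mixture densities is
\[
\mu_q-\mu_{q'} = (q-q')(f_1-f_0).
\]

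\textbf{Step 2 ($\chi^2$ computation).} This gives
\[
\chi^2(\mu_q\|\mu_{q'})=(q-q')^2\int\frac{(f_1-f_0)^2}{(1-q')f_0+q'f_1}\,d\lambda .
\]
This is legitimate because the integrand vanishes wherever the denominator vanishes: there $f_0=f_1=0$, since $q'\in(0,1)$.

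\textbf{Step 3 (pointwise bound).} This is the key step. Write $c=\min\{q',1-q'\}$, so that $(1-q')f_0+q'f_1\ge c(f_0+f_1)\ge c\max\{f_0,f_1\}$. Since $f_0,f_1\ge 0$ we also have $|f_1-f_0|\le\max\{f_0,f_1\}$, hence
\[
(f_1-f_0)^2\le |f_1-f_0|\max\{f_0,f_1\},
\]
and therefore the integrand is at most $|f_1-f_0|/c$.

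\textbf{Step 4 (conclusion).} Integrating, $\int |f_1-f_0|\,d\lambda = 2\,\tv(\nu_0,\nu_1)$, so
\[
\kl(\mu_q\|\mu_{q'})\le \chi^2(\mu_q\|\mu_{q'})\le \frac{2(q-q')^2}{c}\,\tv(\nu_0,\nu_1),
\]
which is exactly the claim.

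The only mild obstacle is citing the inequality $\kl\le\chi^2$. It follows from $\log x\le x-1$ applied inside $\int \log\frac{d\mu_q}{d\mu_{q'}}\,d\mu_q$, together with absolute continuity: $\mu_q\ll\mu_{q'}$ holds because $q'\in(0,1)$.
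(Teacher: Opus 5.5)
Your proof is correct and follows the paper's argument essentially step for step. Both use the dominating measure $\lambda=\nu_0+\nu_1$, bound KL by $\chi^2$, lower-bound the denominator pointwise by $\min\{q',1-q'\}(f_0+f_1)$, and then bound $\int (f_0-f_1)^2/(f_0+f_1)\,d\lambda \le \int |f_0-f_1|\,d\lambda$, which is twice the total variation distance. The only difference is cosmetic: you verify that last triangular-discrimination step pointwise via $|f_1-f_0|\le\max\{f_0,f_1\}$, whereas the paper cites Tops{\o}e.
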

\begin{proof}
    Take as reference measure $\lambda=\nu_0+\nu_1$, so that $\nu_0,\nu_1\ll \lambda$. Let $f_0,f_1$ be the densities of $\nu_0,\nu_1$ with respect to $\lambda$. Then, for $q\in(0,1)$, $\mu_q$ has density $g_q=(1-q)f_0+qf_1$. Upper-bounding the KL divergence by the $\chi^2$ divergence and the triangular discrimination by twice the total variation distance (see, e.g., \citealp{Topsoe00}):
    \begin{align}
        \kl(\mu_q\|\mu_{q'}) &
        \le \int_\Omega \frac{(g_q-g_{q'})^2}{g_{q'}} \de \lambda \\
        &= (q'-q)^2\int_\Omega \frac{(f_0-f_1)^2}{(1-q')f_0+q'f_1}\de\lambda \\
        &\le \frac{(q'-q)^2}{\min\{q',1-q'\}}\int_\Omega\frac{(f_0-f_1)^2}{f_0+f_1}\de\lambda \\
        &\le \frac{2(q'-q)^2}{\min\{q',1-q'\}}\tv(\nu_0,\nu_1).
    \end{align}
\end{proof}

\begin{lemma}\label{lem:kluniform}
    Let $S\subseteq [K]$ such that $|S|=m$ and consider two probability models $P$ and $Q$ for a random vector $r\in[0,1]^K$. Let $J\sim\unif(S)$ independently of $r$ and let $Y=(J,r(J))$. Let $P_i,Q_i$ denote the marginal laws of $r(i)$ under $P$ and $Q$, respectively. Then:
    \begin{equation*}
        \kl(Q_Y\|P_Y) = \frac{1}{m}\sum_{i\in S}\kl(Q_i\|P_i).
    \end{equation*}
\end{lemma}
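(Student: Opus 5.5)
The identity follows from the chain rule for the KL divergence applied to the pair $Y=(J,r(J))$, conditioning first on the index $J$. The index $J\sim\unif(S)$ is drawn independently of $r$ under both models. Hence the marginal law of $J$ is the same under $P$ and $Q$, namely uniform on $S$, and contributes zero KL divergence. Conditional on $J=i$, independence of $J$ and $r$ implies that $r(J)=r(i)$ has law $Q_i$ under $Q$ and $P_i$ under $P$.

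The steps, in order, are as follows. First, write the joint laws as $Q_Y(\{i\}\times B)=\frac{1}{m}Q_i(B)$ and $P_Y(\{i\}\times B)=\frac{1}{m}P_i(B)$ for $i\in S$ and Borel sets $B\subseteq[0,1]$. Second, handle absolute continuity. If $Q_i\not\ll P_i$ for some $i\in S$, then $Q_Y\not\ll P_Y$, and both sides equal $+\infty$. Otherwise the Radon--Nikodym derivative is
\begin{equation*}
\frac{\de Q_Y}{\de P_Y}(i,x)=\frac{\de Q_i}{\de P_i}(x),
\end{equation*}
since the common factor $1/m$ cancels. Third, compute directly:
\begin{equation*}
\kl(Q_Y\|P_Y)=\sum_{i\in S}\frac{1}{m}\int\log\frac{\de Q_i}{\de P_i}\,\de Q_i=\frac{1}{m}\sum_{i\in S}\kl(Q_i\|P_i).
\end{equation*}
Equivalently, this is the chain rule $\kl(Q_Y\|P_Y)=\kl(Q_J\|P_J)+\EEp{Q}{\kl(Q_{r(J)\mid J}\|P_{r(J)\mid J})}$ with a vanishing first term.

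There is no serious obstacle. The only point needing care is measure-theoretic: the observation space is $S\times[0,1]$, a disjoint union of copies of $[0,1]$, so the density factorizes on each slice. One must also check that only the marginals $Q_i,P_i$ enter, not the joint law of $r$. This holds because only one coordinate of $r$ is revealed, and $J$ is independent of $r$, so the conditional law of $r(J)$ given $J=i$ is exactly the marginal of $r(i)$.
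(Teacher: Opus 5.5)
Your proof is correct and follows the paper's argument: the paper applies the chain rule for the KL divergence with $J$ first, notes that $J$ is uniform on $S$ under both models so the first term vanishes, and uses the independence of $J$ and $r$ to identify the conditional law of $r(J)$ given $J=i$ with the marginal of $r(i)$. Your explicit Radon--Nikodym computation on the slices $\{i\}\times[0,1]$ and your absolute-continuity check simply make that chain-rule step fully rigorous.
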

\begin{proof}
    By the chain rule of the KL divergence:
    \begin{equation}
        \kl(Q_Y\|P_Y) = \kl(Q_J\|P_J) + \EEp{J\sim Q_J}{\kl(Q_{r(J)|J}\|P_{r(J)|J})} = \frac{1}{m}\sum_{i\in S}\kl(Q_i\|P_i),
    \end{equation}
    since $P_J=Q_J=\unif(S)$ and $J$ is independent of $r$.
\end{proof}

\section{Experiments: Further Results and Details}\label{app:exp}
We test each algorithm with its feedback model, as in the main text, but on different kinds of reward sequences. In all the experiments from this section, $K=10$, $m=4$, $T=50000$, $G$ is a random set of $m$ good arms, and $\varepsilon=0.1$.

\paragraph{Computing infrastructure.} All experiments were run on a Lenovo ThinkPad X1 2-in-1 Gen 9 with Intel® Core™ Ultra 7 155U × 14 CPU and 16GiB of RAM.

\paragraph{Hyperparameters. } We use the hyperparameters prescribed by the theory. For \algname, we use $\eta=\min\left(\sqrt{\frac{\ln(K/m)}{2KT}},\frac{3-e}{K}\right)$ and $\epsilon=K\eta$ (Theorem~\ref{th:upper}). For OSMA, we use $\eta=\sqrt{\frac{2m\ln(Km)}{KT}}$~\citep[][Theorem 3]{audibert2014regret}. For EXP2, we use $\gamma=\min\left(0.5,\sqrt{\frac{K\ln {K\choose m}}{3T}}\right)$ and $\eta=\gamma/(mK)$~\citep[][Theorem 4]{bubeck2012towards}. For $K$-Metaplayer, we use $\eta=\sqrt{\frac{\ln {K\choose m}}{mKT}}$~\citep[][Lemma 1]{AlaturL020}.

\paragraph{Stationary stochastic correlated rewards (100 realizations).} This is the reward family described in Section~\ref{sec:exp}. We just repeat the experiment of Figure~\ref{fig:correlated} on 100 independently generated instances. The results are shown in Figure~\ref{fig:full}. In all other experiments, the instance is held fixed across independent runs and the seed just controls the randomness of the algorithm and of the feedback.

\begin{figure}[t]
    \centering
    \includegraphics[width=\linewidth]{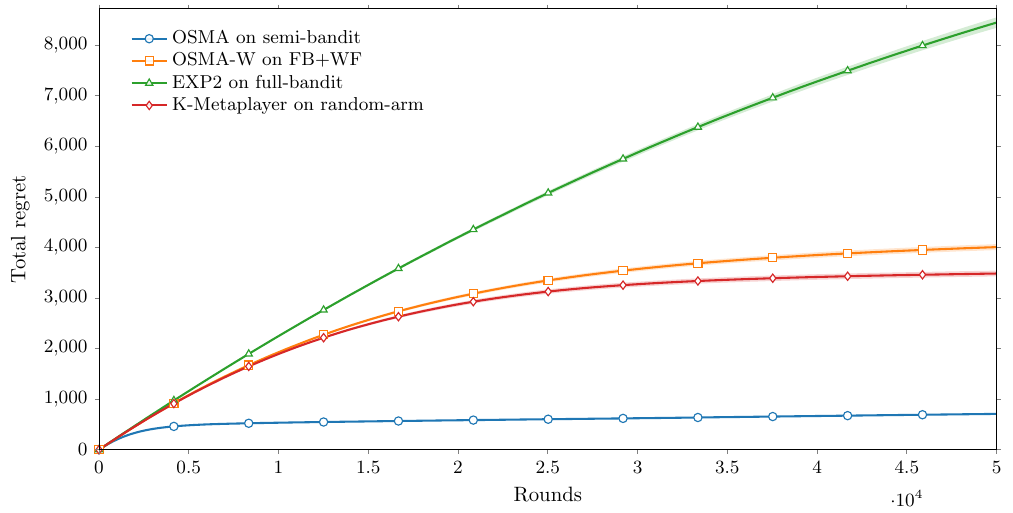}
    \caption{Results on stationary correlated rewards. Total regret per round, averaged over 100 independent reward sequences, with 99\% bootstrap confidence intervals.}
    \label{fig:full}
\end{figure}

\paragraph{Stationary uniform rewards.} The rewards are independent and uniform, $r_{t,i} = U_{t,i} + \varepsilon\indi{i\in G}$, where $U_{t,i}\sim\unif([0,1-\varepsilon])$. The results are shown in Figure~\ref{fig:uniform}

\begin{figure}[t]
    \centering
    \includegraphics[width=\linewidth]{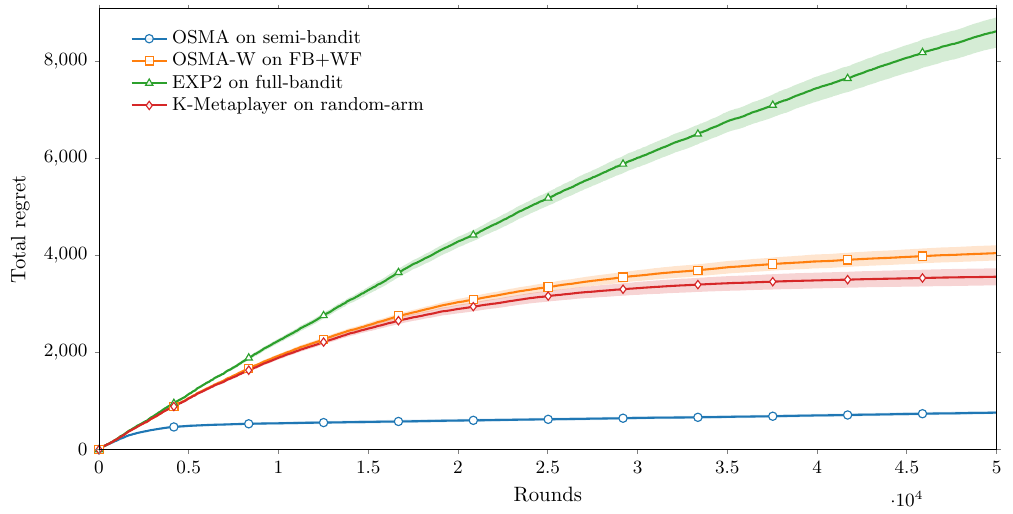}
    \caption{Results on stationary uniform rewards. Total regret per round, averaged over 10 seeds, with 95\% bootstrap confidence intervals.}
    \label{fig:uniform}
\end{figure}



\paragraph{Nonstationary uniform rewards.} Rewards are again defined as $r_{t,i} = U_{t,i} + \varepsilon\indi{i\in G_t}$, but the set of good arms is time-dependent. In particular, we consider $4$ blocks of geometrically increasing size. The set of good arms is sampled independently for each block and held fixed for the duration of the block. In Figure~\ref{fig:blocks} we plot the regret with respect to the best per-block $m$-subset. This is a stronger notion than what considered in our theory, but all algorithms seem capable of keeping this dynamic regret sublinear.

\begin{figure}[t]
    \centering
    \includegraphics[width=\linewidth]{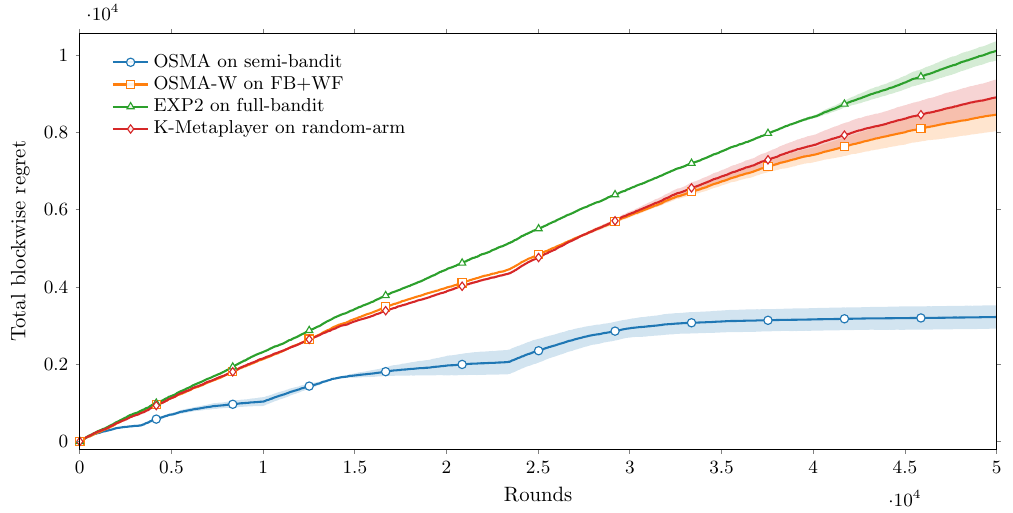}
    \caption{Results on nonstationary rewards. Total dynamic regret per round, averaged over 10 seeds, with 95\% bootstrap confidence intervals.}
    \label{fig:blocks}
\end{figure}

\paragraph{Stationary Bernoulli rewards.} The rewards are independent and binary, $r_{t,i}\sim \ber(U_{t,i}+\varepsilon\indi{i\in G})$, $U_{t,i}\sim\unif([0,1-\varepsilon])$. The results are shown in Figure~\ref{fig:bernoulli}.

\begin{figure}[t]
    \centering
    \includegraphics[width=\linewidth]{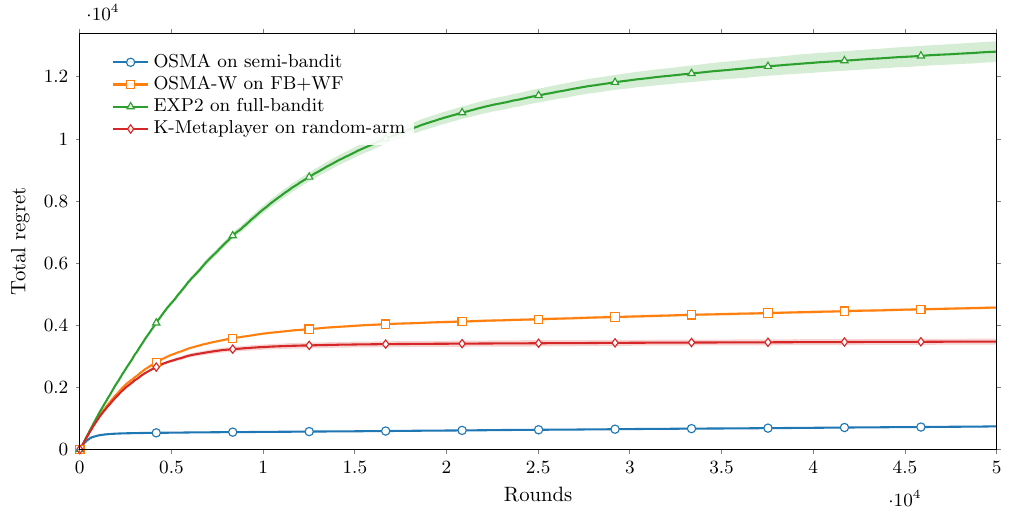}
    \caption{Results on binary rewards. Total regret per round, averaged over 10 seeds, with 95\% bootstrap confidence intervals.}
    \label{fig:bernoulli}
\end{figure}

\end{document}